\newcommand{\x}{{\bm x}}
\newcommand{\bu}{{\bm u}}
\title{\LARGE \bf
Towards Bias Correction of FedAvg over Nonuniform and Time-Varying Communications
}
\author{Ming Xiang, Stratis Ioannidis, Edmund Yeh, Carlee Joe-Wong, and Lili Su%
\thanks{M. Xiang, S. Ioannidis, E. Yeh, and L. Su are with Department of Electrical and Computer Engineering, Northeastern University,
    Boston, MA 02215, USA. 
    C. Joe-Wong is with Department of Electrical and Computer Engineering, Carnegie Mellon University, Pittsburgh, PA 15213, USA. 
\thanks{There is a typo in Lemma \ref{lemma: local step perturbation} of the short version, we have corrected it in this full version.}  
}
}
\begin{document}

\maketitle
\thispagestyle{empty}
\pagestyle{empty}

\begin{abstract}
Federated learning (FL) is a decentralized learning framework wherein a parameter server (PS) and a collection of clients collaboratively train a model via minimizing a global objective. %
Communication bandwidth is a scarce resource; in each round, the PS aggregates the updates from a subset of clients only.  
In this paper, we focus on non-convex minimization that is vulnerable to non-uniform and  time-varying communication failures between the PS and the clients. 
Specifically, in each round $t$, the link between the PS and client $i$ is active with probability $p_i^t$, which is {\em unknown} to both the PS and the clients. 
This arises when %
the channel conditions are heterogeneous across clients and are changing over time.  

We show that when the $p_i^t$'s are not uniform, %
{\em Federated Average} (FedAvg) -- the most widely adopted FL algorithm -- fails to minimize the global objective. Observing this, we propose {\em Federated Postponed Broadcast} (FedPBC) which is a simple variant of FedAvg. It differs from FedAvg in that the PS postpones broadcasting the global model till the end of each round.  
We show that FedPBC converges to a stationary point of the original objective. %
The introduced staleness is mild and there is no noticeable slowdown. Both theoretical analysis and numerical results are provided. 
On the technical front, %
postponing the global model broadcasts enables implicit gossiping among the clients with active links at round $t$. 
Despite $p_i^t$'s are time-varying, %
we are able to bound the perturbation of the global model dynamics %
via the techniques of controlling the gossip-type information mixing errors. %
\end{abstract}
\section{INTRODUCTION}
\label{sec: intro}
Federated learning (FL) %
is a distributed learning paradigm wherein a parameter server (PS) and a large collection of clients collaboratively learn a machine learning model with clients' local data undisclosed \cite{mcmahan2017communication,kairouz2021advances} to the PS. 
The global objetives are often non-convex. Communication bandwidth is a scarce resource. %
In each round, the PS aggregates the updates from a subset of clients only -- either proactively \cite{mcmahan2017communication,kairouz2021advances} or passively \cite{Li2020,philippenko2020bidirectional,wang2022}.  
\begin{figure} 
    \centering
    \includegraphics[width=0.9\linewidth,trim={5cm 6cm 5cm 4cm},clip]{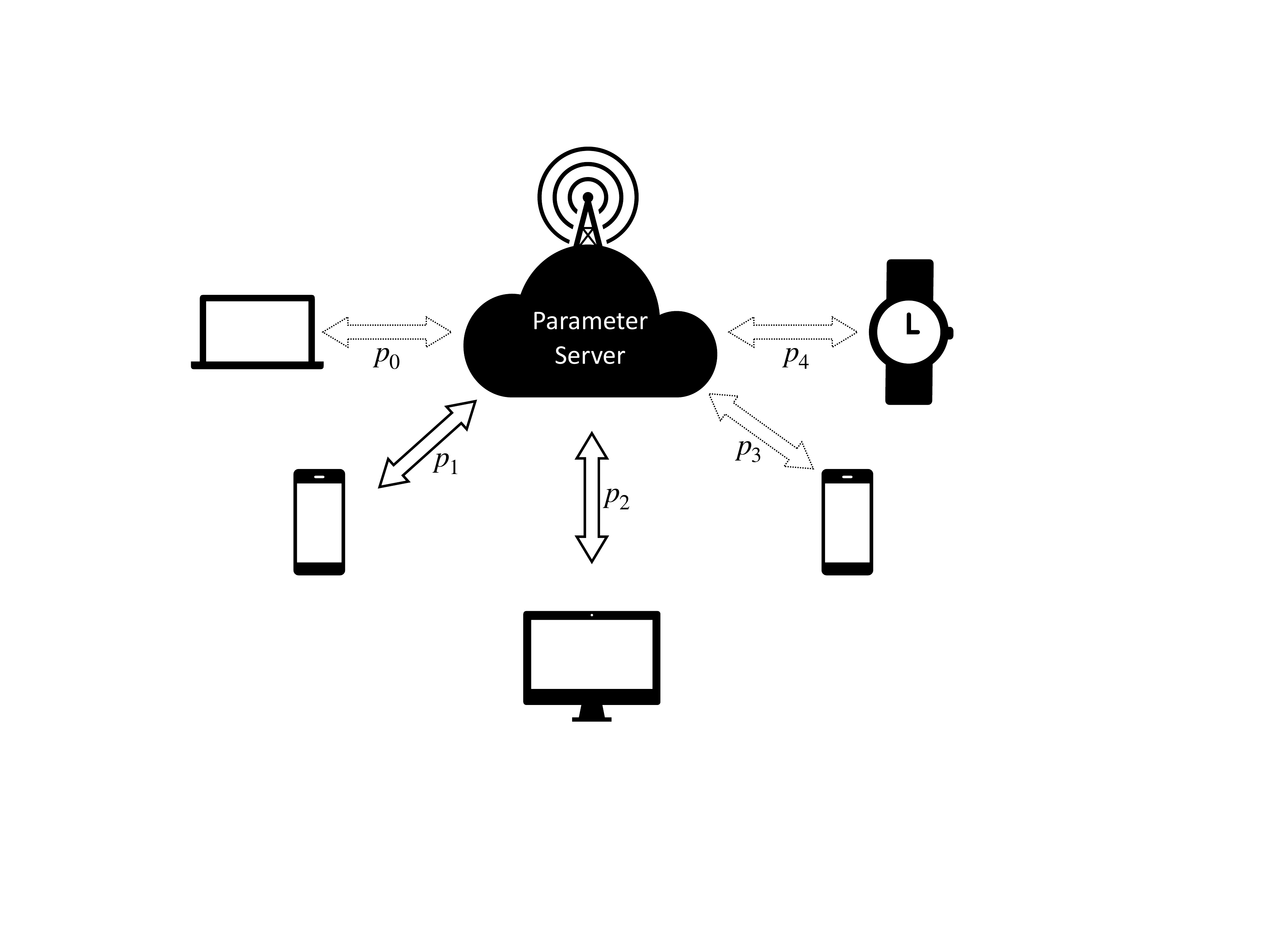}
    \caption{A federated learning system with heterogeneous devices: Solid arrows indicate active links and dashed arrows are inactive links.} 
    \label{fig:FL systems}
\end{figure}
\setlength{\parskip}{0mm}
A FL system is often deployed in a uncontrolled environment, wherein the channel conditions between the PS and the clients could be highly heterogeneous and time-varying \cite{mcmahan2017communication}. 
To capture this, in this paper, we consider non-convex minimization that is vulnerable to non-uniform and time-varying link failures between the PS and the clients. 
Specifically, in each round, the link between the PS and client $i$ is active with probability $p_i^t$, which %
is unknown to both the PS and the clients. A generic FL system of interest is illustrated in Fig.\,\ref{fig:FL systems}. To the best of our knowledge, the convergence of FL in the presence of non-uniform and time-varying communication is overall under-explored.

Our setup can be viewed as a special case of the general client unavailability, has received intensive attention recently~\cite{kairouz2021advances}. %
Nevertheless, existing methods are not applicable to our problem.  
In the seminal works \cite{mcmahan2017communication,Li2020}, the PS chooses $K$ clients either uniformly at random or proportionally to clients' local data volume. %
Neither of theses client selection methods is feasible when $p_i^t$'s are unknown and time-varying. 
In \cite{Li2020,philippenko2020bidirectional,kairouz2021advances,pmlr-v180-jhunjhunwala22a}, the PS waits for the $K$ fastest responses. The correctness of their algorithms crucially relies  on the fact that the response probability of each client is known.  
Ruan et al.~\cite{ruan2021towards} considered a generalized random client unavailability, yet required the response probability to be fixed. 
Time-varying response rates are also considered in \cite{wang2022, perazzone2022communication,gu2021fast}. 
For the methods in \cite{wang2022} to converge to stationary points, the response rates need to be ``balanced'' in the sense that 
either (1) the $p_i^t$'s are deterministic and satisfy the regularized participation, i.e., $\sum_{\tau=1}^P p_i^{t_0+\tau} =  \mu$ for all clients at all $t_0\in \{0, P, 2P, \cdots\}$ where $P$ is some carefully chosen integer; or (2) $p_i^t$'s are random and satisfy $\expect{p_i^t} = \mu$ for all clients and sufficiently many $t$.  In contrast, we do not require such rate ``balanceness''. 
Perazzone et al.\,\cite{perazzone2022communication} analyzed the convergence of {FedAvg} under time-varying client participation rates. Nevertheless, they assumed (1) a uniform participation rate in each round, i.e., $p_i^t = p_j^t$ for any pair of clients, and 
 (2) bounded stochastic gradient. 
Gu et al.\,\cite{gu2021fast} considered general client unavailability patterns for both strongly convex and non-convex global objectives. For non-convex objectives (which is our focus), they required that the consecutive unavailability rounds of a client to be deterministically upper bounded, which does not hold even for the simple uniform and time-invariant response rates. Moreover, they required the noise of the stochastic gradient to be uniformly upper bounded with probability 1.

\vskip 0.6\baselineskip 
\noindent{\bf Contributions.} 
Our contributions is three-fold: 
\begin{itemize}
\item We identify simple instances and show both analytically and numerically that when the $p_i^t$'s are not uniform {\em Federated Average} (FedAvg) -- the most widely adopted FL algorithm -- fails to minimize the global objective.
\item We propose {\em Federated Postponed Broadcast} (FedPBC). %
It differs from FedAvg in that the PS postpones broadcasting the global model till the end of each round. 
We show in Theorem 1 that, in expectation, FedPBC converges to a stationary point of the global objective. %
The correctness of our FedPBC 
{\em neither} impose any ``balancedness'' requirement on $p_i^t$'s {\em nor} require the stochastic gradients or their noises to be bounded. Moreover, compared with \cite{gu2021fast,wang2022}, FedPBC works under a much relaxed bounded-dissimilarity assumption.

On the technical front, %
postponing the global model broadcasts enables implicit gossiping among the clients with active links.  
Hence, we mitigate the perturbation %
caused by non-uniform and time-varying $p_i^t$ via the techniques of controlling information mixing errors. %

\item We validate our results empirically both on the counterexample and by using Synthetic $(1,1)$ dataset \cite{li2020federated}.
    The numerical results in the former show that FedPBC successfully corrects the bias when $p_i^t$'s are static but non-uniform (i.e., $p_i^t = p_i$) %
    while FedAvg does not. Moreover, the staleness is mild and there is no noticeable slowdown.
    In the latter, we further investigate {\it time-varying} link activation rates such that they first satisfy $\expect{p_i^t}\triangleq \prob{Z_i^t = i},$ 
    where $Z_i^t$ follows Zipf distribution and is \iid over the time horizon. 
    It is then clipped to ensure a lower bound.
    The results show FedPBC significantly outperforms FedAvg.
\end{itemize}

\section{Problem Formulation}
A FL system consists of %
one central PS and $m$ clients %
that collaboratively  minimize
\begin{align}
\min\limits_{\x\in\reals^d}F\pth{\x} = \frac{1}{m} \sum_{i\in[m]} F_i \pth{\x}, 
\label{eq: global objective}
\end{align}
where
$F_i\pth{\x} = \Expect_{\xi_i\in \calD_i}[\ell_i \pth{\x;\xi_i}]$ is the local objective, 
$\calD_i$ is the local distribution, %
$\xi_i$ is a stochastic sample that client $i$ has access to,
and $\ell_i$ is the local loss function.
The loss function can be non-convex. 
We are interested in solving Eq.\,\eqref{eq: global objective} over unreliable communication links between the PS and the clients. 
In each round $t$, the communication link between the PS and client $i$ is active with probability $p_i^t$, which could be {\bf time-varying} and is {\bf unknown} to both the PS and the clients.  
We assume that $p_i(t)\ge c$ for all $t$ and all $i$, where $c\in (0,1)$.

\section{A Case Study on the Objective Inconsistency of {FedAvg}}
\label{sec: case study counterexample}
In this section, we use a simple example (a similar setup as in \cite{wang2020tackling}) to illustrate {FedAvg} 
fails to minimize the global objective in 
Eq.\,\eqref{eq: global objective} when $p_i$'s are not uniform.  
For completeness, we formally describe {FedAvg} in Algorithm \ref{alg: fedavg}. 
\begin{algorithm}[h]
\textbf{Input:} $T$, $\x^0$, $s$, $\sth{\eta_t}_{t=0, \cdots, T-1}$ 

The PS and each client initialize parameter $\x^0$; 

\For{$t=0, \cdots, T-1$}
{

\tcc{\color{blue} Let $\calA^t$ denote all the clients with active communication links. }

The PS broadcasts $\x^{t}$ to each client\; %

\For{$i\in [m]$}
{
Draw a fresh sample $\xi_{i}^t$\;

\eIf{$i\in \calA^t$}{ $\x_i^{\pth{t , 0}} \gets \x^{t}$\;}{$\x_i^{\pth{t , 0}} \gets \x_i^{t}$\;}
\For{$k=0, \cdots, s-1$}
{
$\x_i^{(t, k+1)} \gets \x_i^{(t, k)} - \eta_t \nabla \ell_i(\x_i^{(t, k)}; \xi_{i}^t)$\;  
}
$\x_i^{t+1} \gets \x_i^{(t, s)}$\; 
Report $\x_i^{t+1}$ to the PS\; 
}
\tcc{\color{blue} On the PS.}
\eIf{$\calA^t\not=\emptyset$}{$\x^{t+1} \gets \frac{1}{\abth{\calA^t}} \sum_{i\in \calA^t}\x_i^{t+1}$\; }
{
$\x^{t+1} \gets \x^{t}$\; 
}
} 
\caption{Federated Average ({FedAvg}) \cite{mcmahan2017communication}}
\label{alg: fedavg}
\end{algorithm}
Notably, in Algorithm \ref{alg: fedavg}, all the clients (regardless of whether the corresponding links are active or not) compute locally in Algorithm \ref{alg: fedavg} in each round. This is {\it logically equivalent} to the usual setting where only clients in $\calA^t$ %
do the local steps because in line 19 the summation is taken over the clients in $\calA^t$. 
Similar equivalence is observed in \cite{wang2022}.  We present the FedAvg in the form of Algorithm \ref{alg: fedavg} for ease of comparison with our {FedPBC} -- an algorithmic fix to {FedAvg} for bias correction.

Let the local objective $F_i\pth{\x} = \frac{1}{2} \norm{\x - \bu_i}^2,$ where $\bu_i\in \reals^d$ is an arbitrary vector. The corresponding global objective is thus 
\begin{align}
F\pth{\x} = \frac{1}{m} \sum_{i=1}^m F_i\pth{\x} = \frac{1}{2m} \sum_{i=1}^m \norm{\x - \bu_i}^2,\label{eq: counterexample global objective}
\end{align}
with unique 
minimizer %
$\x^\star = \frac{1}{m} \sum_{i=1}^m \bu_i.$ 
\begin{proposition}
\label{proposition: nonuniform}
Choose $\x^0 = \bm{0}$ and $\eta_t = \eta \in (0,1)$ for all $t$. 
For a global objective as per Eq.\,\eqref{eq: counterexample global objective}, if $p_i^t=p_i$ for all $t$, 
under {FedAvg} with exact local gradients %
\begin{small}
\[
\lim_{T\diverge}\x^T = \sum_{i=1}^m\frac{p_i\bu_i \qth{1
+\sum_{j=2}^{m} \pth{-1}^{j+1} \frac{1}{j}\sum_{ 
S \in \calB_j} \prod_{z\in S} p_z}}{1-\Pi_{i=1}^m \pth{1-p_i}},
\]
\end{small}
where $\calB_j \triangleq \sth{S\Big|S\subseteq [m]\setminus\sth{i}, \abth{S} = j-1}.$
\end{proposition}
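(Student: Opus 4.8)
\emph{Proof idea.} Since $\x^T$ is random through the active sets $\calA^t$ while the right-hand side is deterministic, I read the claim as a statement about $\mathbb{E}[\x^T]$. The plan has three stages: collapse the $s$ local steps into a single affine map, reduce the global recursion to a scalar affine recursion in expectation, and evaluate one combinatorial expectation in closed form. For the first stage I would exploit that $F_i$ is quadratic, so one local step is $\x_i^{(t,k+1)}=(1-\eta)\x_i^{(t,k)}+\eta\bu_i$, and hence $s$ of them give $\x_i^{t+1}=\alpha\,\x_i^{(t,0)}+(1-\alpha)\bu_i$ with $\alpha\triangleq(1-\eta)^s\in(0,1)$ (using $s\ge1$, $\eta\in(0,1)$). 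For an active client $i\in\calA^t$ we have $\x_i^{(t,0)}=\x^t$, so $\x_i^{t+1}=\alpha\x^t+(1-\alpha)\bu_i$, which depends on $\x^t$ alone; in particular the drifting local states of the inactive clients never feed back into the PS iterate, so $\{\x^t\}$ obeys a self-contained recursion: $\x^{t+1}=\alpha\x^t+(1-\alpha)\bar\bu_{\calA^t}$ on $\{\calA^t\neq\emptyset\}$ and $\x^{t+1}=\x^t$ otherwise, where $\bar\bu_{\calA^t}\triangleq\frac1{|\calA^t|}\sum_{i\in\calA^t}\bu_i$.

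Next, since $\calA^t$ is drawn independently of $\x^t$, setting $q\triangleq\prod_{i=1}^m(1-p_i)$ gives the affine recursion $\mathbb{E}[\x^{t+1}]=A\,\mathbb{E}[\x^t]+b$ with $A=q+\alpha(1-q)$ and $b=(1-\alpha)\,\mathbb{E}[\bar\bu_{\calA^t}\mathbf{1}\{\calA^t\neq\emptyset\}]$. Because $q\le(1-c)^m<1$ and $\alpha<1$, we have $A\in(0,1)$, so $\mathbb{E}[\x^T]\to b/(1-A)$; and since $1-A=(1-\alpha)(1-q)$, the factor $1-\alpha$ cancels, leaving $\lim_{T\to\infty}\mathbb{E}[\x^T]=\mathbb{E}[\bar\bu_{\calA^t}\mathbf{1}\{\calA^t\neq\emptyset\}]/(1-q)$. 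Notably this limit is independent of $\eta$ and $s$, which already exhibits the bias: it differs from $\x^\star=\frac1m\sum_i\bu_i$ whenever the $p_i$'s are not all equal.

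It remains to evaluate that expectation. Let $a_i\triangleq\mathbf{1}\{i\in\calA^t\}$ be independent $\mathrm{Bernoulli}(p_i)$, so $|\calA^t|=\sum_{i\in[m]}a_i$ and $\bar\bu_{\calA^t}\mathbf{1}\{\calA^t\neq\emptyset\}=\big(\sum_{i\in[m]}a_i\bu_i\big)\big/\big(\sum_{i\in[m]}a_i\big)$ (read as $\bm{0}$ when all $a_i=0$); thus $\mathbb{E}[\bar\bu_{\calA^t}\mathbf{1}\{\calA^t\neq\emptyset\}]=\sum_{i\in[m]}\bu_i\,\mathbb{E}[a_i/|\calA^t|]$, the $i$-th term vanishing on $\{a_i=0\}$. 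Conditioning on $a_i=1$ (so $|\calA^t|=1+\sum_{j\neq i}a_j$) and using $1/n=\int_0^1 v^{n-1}\,dv$,
\[
\mathbb{E}\big[a_i/|\calA^t|\big]=p_i\int_0^1\prod_{j\neq i}(1-p_j+p_j v)\,dv=p_i\int_0^1\prod_{j\neq i}(1-p_j u)\,du,
\]
where $u=1-v$. Expanding $\prod_{j\neq i}(1-p_j u)=\sum_{S\subseteq[m]\setminus\{i\}}(-u)^{|S|}\prod_{z\in S}p_z$ and integrating termwise ($\int_0^1(-u)^{k}\,du=(-1)^{k}/(k+1)$, with the empty set contributing $1$) yields $\mathbb{E}[a_i/|\calA^t|]=p_i\big[1+\sum_{j=2}^m\frac{(-1)^{j+1}}{j}\sum_{S\in\calB_j}\prod_{z\in S}p_z\big]$. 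Substituting into the limit from the previous paragraph and dividing by $1-q=1-\prod_i(1-p_i)$ reproduces exactly the stated formula.

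The main obstacle is this last evaluation: the random normalization $1/|\calA^t|$ couples all the $a_i$, and the naive subset sum $\sum_{S\ni i}\frac1{|S|}\prod_{j\in S}p_j\prod_{j\notin S}(1-p_j)$ carries unwanted $(1-p_j)$ factors, so reaching the all-$p_z$ form in the statement requires an inclusion--exclusion; the identity $1/n=\int_0^1 v^{n-1}\,dv$ is simply the clean bookkeeping device for this. Minor points to verify along the way are the independence of $\calA^t$ from $\x^t$ (which legitimizes the conditional expectation), the degenerate case $p_i=1$ for some $i$ (then $q=0$ and the formula is unchanged), and the implicit assumption $s\ge1$.
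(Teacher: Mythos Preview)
Your proposal is correct and follows essentially the same route as the paper: collapse the $s$ local steps to the affine map $\x_i^{t+1}=\alpha\x^t+(1-\alpha)\bu_i$ for $i\in\calA^t$, take expectations to obtain a scalar affine recursion whose fixed point is $\mathbb{E}[\bar\bu_{\calA^t}\mathbf{1}\{\calA^t\neq\emptyset\}]/(1-q)$, and then evaluate $\mathbb{E}[a_i/|\calA^t|]$ by conditioning on $a_i=1$. The only real difference is expository: where the paper asserts the final alternating-sum formula ``by inspection of the terms,'' you supply an explicit derivation via the identity $1/n=\int_0^1 v^{n-1}\,dv$ (an identity the paper itself uses elsewhere, in the proof of Lemma~\ref{lemma: ergodicity}), which is a cleaner and more self-contained justification of the same step.
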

The proof of Proposition \ref{proposition: nonuniform} can be found in Appendix. 
It can be checked that if there exist $i, i^{\prime}\in [m]$ such that $p_i\not=p_{i^{\prime}}$,  then $\lim_{t\diverge}\x^t\not= \frac{1}{m}\sum_{i=1}^m \bm{u}_i \triangleq \x^*$; when $p_i=p$ for all $i\in [m]$, then $\lim_{t\diverge}\x^t =\x^*$. 
In fact, the output of FedAvg may be arbitrarily away from $\x^{\star}$ depending on $p_i$'s and $\bu_i$'s.

\section{Algorithm: {FedPBC}} 
In this section, we propose {FedPBC} ({\em Federated Postponed Broadcast}, formally described in Algorithm \ref{alg: fedavg variant}) - a simple variant of {FedAvg}. 
\begin{algorithm}[h]
\textbf{Input:} $T$, $\x^0$, $s$, $\sth{\eta_t}_{t=0, \cdots, T-1}$ 

The PS and each client initialize parameter $\x^0$; 

\For{$t=0, \cdots, T-1$}
{
\tcc{\color{blue} Let $\calA^t$ denote all the clients with active communication links\; }

\For{$i\in [m]$}
{
Draw a fresh sample $\xi_{i}^t$\;
$\x_i^{(t,0)} = \x_i^{t}$\; 
\For{$k=0, \cdots, s-1$}
{
$\x_i^{(t, k+1)} = \x_i^{(t, k)} - \eta_t \nabla \ell_i(\x_i^{(t, k)}; \xi_{i}^t)$\;  
}
$\x_i^{t+1} = \x_i^{(t, s)}$\; 
Report $\x_i^{t+1}$ to the PS\; 
} 
\tcc{\color{blue} On the PS.}
\eIf{$\calA^t\not=\emptyset$}{$\x^{t+1} \gets \frac{1}{\abth{\calA^t}} \sum_{i\in \calA^t}\x_i^{t+1}$\; }
{
$\x^{t+1} \gets \x^{t}$\; 
}
Multi-cast $\x^{t+1}$ to each client $i\in \calA^t$\; 
\For{$m\in \calA^t$}
{
$x_i^{t+1} \gets \x^{t+1}$\; 
}
}
\caption{{FedPBC}}
\label{alg: fedavg variant}
\end{algorithm}

The key difference of {FedPBC} from {FedAvg} is that we postpone the global model broadcasts to $\calA^t$ till the end of each round. Postponing the global model broadcast introduces some staleness as the clients might start from different $\x_i^t$ rather than $\x^t$. It turns out that such staleness helps in mitigating the bias caused by non-uniform link activation probabilities.
Moreover, the staleness is mild and there is no significant slowdown. Theoretical analysis and numerical results can be found in Sections \ref{sec: convergence results} and \ref{sec: numerical}, respectively.

\vskip 0.6\baselineskip 
\noindent{\bf Implicit gossiping among clients $\calA^t$. }
From line 14 to line 22 of Algorithm \ref{alg: fedavg variant}, via the coordination of the PS, the clients in $\calA^t$ {\em implicitly} average their local updates with each other, i.e., there is implicit gossiping among the clients in $\calA^t$ at round $t$. 
Formally, we are able to construct a mixing matrix $W^{(t)}$ as
\begin{align*}
W_{ij}^{(t)} =
\begin{cases} 
\frac{1}{\abth{\calA^t}}, &~~~~ \text{if }i, j\in \calA^t; \\    
1, &~~~~ \text{if } i=j\,\text{and} \,\sth{i\notin \calA^t};\\ %
0, &~~~~ \text{otherwise}. 
\end{cases}
\end{align*}
The matrix is by definition {\it doubly-stochastic} and $W^{(t)} = \identity$ when $\calA^t =\emptyset$ or $|\calA^t| =1$.   
We further note that this matrix can be {\it time-varying} even in expectation since the link activation probabilities $p_i^t$'s can be {\it time-varying}. 
As can be seen later, %
this mixing matrix bridges the gap between local and global model heterogeneity and establishes a consensus among different clients.

Let $M^{(t)}: = \expect{\pth{W^{(t)}}^2}$ and $\allones : = \frac{1}{m} \Indc \Indc^\top.$ Define %
as 
 \begin{align}
\label{eq: mixing expected rate}
\rho(t):=\lambda_2\pth{M^{(t)}} ~~~~ \text{and} ~~~ \rho := \max_t \rho(t). 
\end{align}
\begin{lemma}[Ergodicity]
\label{lemma: ergodicity}
Recall that $p_i^t\ge c$ for some constant $c\in (0,1)$. For each $t\ge 1$, it holds that 
$\rho \le 1 - \frac{c^4\qth{1-\pth{1-c}^m}^2}{8}$. 
\end{lemma}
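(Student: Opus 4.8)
The plan is to reduce the claim to a spectral-gap estimate for the random mixing matrix $W^{(t)}$ and then to an elementary lower bound on a ``within-active-set variance.'' Note first that, after relabeling, $W^{(t)}$ is block diagonal: an $\abth{\calA^t}\times\abth{\calA^t}$ block equal to $\tfrac{1}{\abth{\calA^t}}\Indc\Indc^{\top}$, and an identity block of size $m-\abth{\calA^t}$. Each block is an orthogonal projection, so $W^{(t)}$ is symmetric, doubly stochastic, idempotent and positive semidefinite; hence $\pth{W^{(t)}}^{2}=W^{(t)}$ and $M^{(t)}=\expect{W^{(t)}}$ is symmetric, doubly stochastic and PSD. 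Therefore $\Indc$ is a leading eigenvector with eigenvalue $1$ and, by the variational characterization of $\lambda_{2}$,
\begin{equation*}
\rho(t)=\max_{v\perp\Indc,\,\norm{v}=1} v^{\top}M^{(t)}v=\max_{v\perp\Indc,\,\norm{v}=1}\expect{\norm{W^{(t)}v}^{2}},
\end{equation*}
the second equality using that $W^{(t)}$ is symmetric. For a fixed realization of $\calA^{t}$ a short calculation gives $\norm{v}^{2}-\norm{W^{(t)}v}^{2}=\sum_{i\in\calA^{t}}\pth{v_{i}-\bar v}^{2}\ge 0$ with $\bar v:=\tfrac{1}{\abth{\calA^{t}}}\sum_{i\in\calA^{t}}v_{i}$, so it suffices to lower-bound $\expect{\sum_{i\in\calA^{t}}(v_{i}-\bar v)^{2}}$ uniformly over unit $v\perp\Indc$.

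I would obtain such a bound from the pairwise identity $\sum_{i\in\calA^{t}}(v_{i}-\bar v)^{2}=\tfrac{1}{\abth{\calA^{t}}}\sum_{\{i,j\}\subseteq\calA^{t}}(v_{i}-v_{j})^{2}\ge\tfrac1m\sum_{\{i,j\}\subseteq\calA^{t}}(v_{i}-v_{j})^{2}$, combined with independence of the link activations across clients (implicit in the per-client Bernoulli model, and in fact needed for $\rho<1$ to hold), which gives $\prob{i\in\calA^{t},\,j\in\calA^{t}}=p_{i}^{t}p_{j}^{t}\ge c^{2}$. Taking expectations and using $\sum_{i<j}(v_{i}-v_{j})^{2}=m\norm{v}^{2}-(\Indc^{\top}v)^{2}=m$,
\begin{equation*}
\expect{\textstyle\sum_{i\in\calA^{t}}(v_{i}-\bar v)^{2}}\;\ge\;\frac{1}{m}\sum_{i<j}(v_{i}-v_{j})^{2}\,p_{i}^{t}p_{j}^{t}\;\ge\;\frac{c^{2}}{m}\sum_{i<j}(v_{i}-v_{j})^{2}\;=\;c^{2},
\end{equation*}
so $\rho\le 1-c^{2}$; since $c^{2}\le 1$ and $[1-(1-c)^{m}]^{2}\le 1$, this already implies the stated bound.

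The exact constant in the statement (the factor $\tfrac18$, and $1-(1-c)^{m}$, which is a lower bound on $\prob{\calA^{t}\ne\emptyset}$ — the probability that round $t$ mixes at all) points instead to a Cheeger-type route: as $M^{(t)}$ is reversible with respect to the uniform distribution and PSD, $1-\lambda_{2}(M^{(t)})\ge h_{t}^{2}/8$ where $h_{t}=\min_{\emptyset\ne S\subsetneq[m]} m\sum_{i\in S,\,j\notin S}M^{(t)}_{ij}\big/(\abth{S}\,\abth{S^{c}})$ is the bottleneck ratio of the expected mixing graph, and each cut weight $\sum_{i\in S,\,j\notin S}M^{(t)}_{ij}=\expect{\abth{\calA^{t}\cap S}\,\abth{\calA^{t}\cap S^{c}}/\abth{\calA^{t}}}$ is lower-bounded by replacing $1/\abth{\calA^{t}}$ by $1/m$ and using independence across the disjoint blocks $S$ and $S^{c}$ together with $\prob{\calA^{t}\cap S\ne\emptyset}\ge 1-(1-c)^{\abth{S}}$ and $p_{i}^{t}\ge c$. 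The main obstacle is exactly this last step: the normalization $1/\abth{\calA^{t}}$ is random and correlated with the numerator, and the no-mixing events $\abth{\calA^{t}}\le 1$ must be accommodated; decoupling the ratio costs a factor $m$ that has to be recovered from $\abth{S}\,\abth{S^{c}}$ in the denominator of $h_{t}$, and carrying these cancellations (and the ``a cut is activated'' probability) through is what fixes the precise form of the constant. The direct argument above sidesteps this by invoking the pairwise structure exactly at the step $1/\abth{\calA^{t}}\ge 1/m$, with nothing further lost.
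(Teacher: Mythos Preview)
Your argument is correct and in fact sharper than the paper's: you prove $\rho\le 1-c^{2}$, which immediately implies the weaker stated bound since $c^{2}\ge c^{4}\qth{1-(1-c)^{m}}^{2}/8$. The paper takes essentially your second (Cheeger) route: it computes entrywise lower bounds $M^{(t)}_{jj'}\ge \tfrac{c^{2}}{m}\qth{1-(1-c)^{m}}$ on the expected squared mixing matrix, observes that the associated Markov chain is reversible with uniform stationary distribution, lower-bounds the conductance by $\Phi\ge \tfrac{c^{2}}{2}\qth{1-(1-c)^{m}}$, and then applies Cheeger's inequality $1-\lambda_{2}\ge \Phi^{2}/2$ to land exactly on the constant in the lemma. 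Your direct variational approach---noting idempotency of $W^{(t)}$ so that $M^{(t)}=\expect{W^{(t)}}$, rewriting the spectral gap as an expected within-active-set variance, and then invoking the pairwise identity together with $1/\abth{\calA^{t}}\ge 1/m$ and independence---is more elementary (no Cheeger needed), shorter, and tighter by a factor $c^{2}\qth{1-(1-c)^{m}}^{2}/8$. The paper's route is more boilerplate but pays twice: once when replacing every $M^{(t)}_{jj'}$ by its uniform entrywise lower bound, and once in Cheeger's inequality itself.
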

We defer the proof of Lemma \ref{eq: mixing expected rate} to Appendix. 
The following lemma will be used in the convergence analysis. 
\begin{lemma}
\label{lemma: average mixing}
For any matrix $B\in \reals^{d\times m}$, it holds that 
\[
\expect{\fnorm{B\pth{\prod_{r=1}^t W^{(r)} - \allones}}^2} \le \rho^t\|B\|^2_F.  
\] 
\end{lemma}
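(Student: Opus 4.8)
The plan is the standard gossip-type telescoping argument: peel off the last mixing matrix from the product, establish a one-step contraction, and iterate. Write $P_t := \prod_{r=1}^t W^{(r)}$ and $Y_t := P_t - \allones$, with $Y_0 := \identity - \allones$. Because every $W^{(r)}$ is symmetric and doubly stochastic, it satisfies $W^{(r)}\allones = \allones W^{(r)} = \allones$, and since products of doubly stochastic matrices are doubly stochastic the same holds for $P_{t-1}$; together with $\allones^2 = \allones$ this gives, for every $t \ge 1$,
\[
Y_t = P_{t-1}(\identity - \allones)\,(W^{(t)} - \allones) = Y_{t-1}\,(W^{(t)} - \allones).
\]
The key point is that $W^{(t)}$ is independent of $Y_{t-1}$, since $Y_{t-1}$ is a function of $W^{(1)},\dots,W^{(t-1)}$ only (link activations in distinct rounds are independent).

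The heart of the argument is a one-step bound: for any (possibly random) matrix $C$ independent of $W^{(t)}$,
\[
\expect{\fnorm{C\,(W^{(t)} - \allones)}^2 \,\big|\, C} \le \rho\,\fnorm{C}^2 .
\]
To see this, put $A := C^\top C \succeq 0$. Using symmetry of $W^{(t)}$ and $\allones$, the cyclic property of the trace, and $W^{(t)}\allones = \allones W^{(t)} = \allones^2 = \allones$,
\[
\fnorm{C(W^{(t)}-\allones)}^2 = \mathrm{tr}\!\big(A\,(W^{(t)}-\allones)^2\big) = \mathrm{tr}\!\big(A\,((W^{(t)})^2 - \allones)\big).
\]
Taking $\expect{\cdot \mid C}$ and recalling $M^{(t)} = \expect{(W^{(t)})^2}$ turns the right-hand side into $\mathrm{tr}(A(M^{(t)} - \allones))$. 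Now $M^{(t)}$ is symmetric, PSD (an average of squares of symmetric matrices), and doubly stochastic; hence it leaves $\mathrm{span}(\Indc)$ and $\Indc^{\perp}$ invariant, coincides with $\allones$ on the former, and on the latter its eigenvalues lie in $[0,\lambda_2(M^{(t)})] = [0,\rho(t)]$. Therefore $0 \preceq M^{(t)} - \allones \preceq \rho(t)(\identity - \allones) \preceq \rho\,\identity$, and since $A \succeq 0$ this yields $\mathrm{tr}(A(M^{(t)}-\allones)) \le \rho\,\mathrm{tr}(A) = \rho\fnorm{C}^2$.

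Finally, apply the one-step bound with $C = BY_{t-1}$, which is measurable with respect to $\mathcal{F}_{t-1} := \sigma(W^{(1)},\dots,W^{(t-1)})$ and hence independent of $W^{(t)}$: conditioning on $\mathcal{F}_{t-1}$ and using the recursion gives $\expect{\fnorm{BY_t}^2 \mid \mathcal{F}_{t-1}} \le \rho\,\fnorm{BY_{t-1}}^2$, so by the tower property $\expect{\fnorm{BY_t}^2} \le \rho\,\expect{\fnorm{BY_{t-1}}^2}$ for all $t \ge 1$. Iterating down to $Y_0 = \identity - \allones$, which is an orthogonal projection and hence satisfies $\fnorm{BY_0}^2 \le \fnorm{B}^2$, we obtain $\expect{\fnorm{BY_t}^2} \le \rho^t\fnorm{BY_0}^2 \le \rho^t\fnorm{B}^2$, which is the claim.

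There is no serious obstacle here — this is the textbook contraction estimate for random, time-varying doubly stochastic gossip. The points that need care are: the trace identity $(W^{(t)}-\allones)^2 = (W^{(t)})^2 - \allones$, which relies on $W^{(t)}$ being doubly stochastic and symmetric; the spectral comparison $M^{(t)} - \allones \preceq \rho(t)(\identity-\allones)$, where it is essential that $M^{(t)} = \expect{(W^{(t)})^2}$ is PSD — working with $\expect{W^{(t)}}$ instead would not control the squared Frobenius norm — so that no negative eigenvalue on $\Indc^{\perp}$ escapes the bound; and the use of independence of $W^{(t)}$ from the past so that the conditional expectation passes inside the trace onto $M^{(t)}$.
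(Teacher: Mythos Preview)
Your proof is correct and follows essentially the same route as the paper's: peel off the last factor via $Y_t = Y_{t-1}(W^{(t)}-\allones)$, use independence to push the expectation onto $(W^{(t)})^2$, bound the resulting quadratic form by the second eigenvalue of $M^{(t)}$, and iterate. The only cosmetic difference is that the paper works row-by-row with Rayleigh quotients $v_i^\top C_t v_i \le \sigma_{\max}(C_t)\,v_i^\top v_i$, whereas you package the same step as a trace inequality $\mathrm{tr}(A(M^{(t)}-\allones))\le \rho\,\mathrm{tr}(A)$ via the PSD ordering $0\preceq M^{(t)}-\allones\preceq \rho\,\identity$; your version is arguably cleaner in that it makes explicit why the relevant operator norm is $\rho(t)$ (namely, $M^{(t)}$ is PSD so no negative eigenvalue on $\Indc^\perp$ can exceed $\rho(t)$ in absolute value), a point the paper leaves implicit.
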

The proof of Lemma \ref{lemma: average mixing} follows the same outline as that in \cite[Lemma]{wang2022matcha}; it is deferred to Appendix.

\begin{remark}
In Algorithm \ref{alg: fedavg variant},  each client does local computations even if its communication link is not active. 
Continuous local updates appear to be crucial. 
Numerical examples in Section \ref{sec: numerical} show that bias persists when only the active clients do local computations. 
We leave as a future direction on how to remove the bias while maintaining local computation. 
\end{remark}

\section{Convergence Results}
\label{sec: convergence results}
\subsection{Assumptions}
Before diving into our convergence results, we will introduce some assumptions, which are commented towards the end of this subsection. 
\begin{assumption}[Smoothness]
\label{ass: 2 smmothness}
Each local gradient function
$\nabla \ell_{i}(\theta)$ is $L_i$-Lipschitz, \ie,

\[
    \norm{\nabla \ell_{i}(\x_1)-\nabla \ell_{i}(\x_2)}\le L_i \norm{\x_1-\x_2},
\]
for all $\x_1, \x_2,$ and $i\in[m]$. 
Let $L \triangleq \max\limits_{i\in[m]} L_i$. 
\end{assumption}
\begin{assumption}[Bounded Variance]
\label{ass: bounded variance client-wise}
Stochastic gradients at each client node $i\in[m]$ are unbiased estimates of the true gradient of the local objectives, i.e.,  
\[
\expect{\nabla \ell_i(\x_i^t) \mid \calF^{t}}=\nabla F_i(\x_i^t),  
\]
and the variance of stochastic gradients at each client node $i\in[m]$ is uniformly bounded, i.e., 
\[
\expect{\norm{\nabla \ell_i(\x)-\nabla F_i(\x)}^2}\le\sigma^2, 
\]
where $\calF^{t}$ denotes the sigma algebra generated by all the randomness up to iteration $t$. 
\end{assumption}
\begin{assumption}
\label{ass: lower bounds}
There exists $F^*\in \reals$ such that $F(\x)\ge F^*$ for all $\x\in \reals^d$. 
\end{assumption}
\begin{assumption}[Bounded Inter-client Heterogeneity]
\label{ass: bounded similarity}
\begin{align*}
    \frac{1}{m}\sum_{i=1}^m \norm{\nabla F_i(\x)- \nabla F(\x)}^2 \le \beta^2 \norm{\nabla F(\x)}^2+ \zeta^2. 
\end{align*}
\end{assumption}

Assumptions, \ref{ass: 2 smmothness},  \ref{ass: bounded variance client-wise} and \ref{ass: lower bounds} are standard in FL analysis \cite{karimireddy2020scaffold,li2020federated,yuan2022}.
Assumption \ref{ass: bounded similarity} captures the heterogeneity across different users, and it is a more relaxed version (\eg, than \cite{li2020federated,yu2019parallel,wang2021cooperative}.)
Notably, different from \cite{gu2021fast}, we do not assume fresh data per local update, and the unbiasedness in Assumption \ref{ass: bounded variance client-wise} is imposed for global rounds only. 

\subsection{Results}
In this section, we formally state our key lemmas and main theorem. 
All missing proofs can be found in Appendix.
\begin{lemma}[Bounded Local Perturbation]
\label{lemma: local step perturbation}
For $s\ge 1$, we have for all $\x\in\reals^d:$

\begin{small}
\[
\norm{\sum_{k=0}^{s-1}\qth{ \nabla \ell_i (\x^{\pth{t,k}}) - \nabla \ell_i (\x^t)}}
\le \kappa \eta \binom{s}{2} L_i \norm{\nabla \ell_{i}(\x^t)},
\]
\end{small}

where
$
\kappa \triangleq \max_{i}\frac{(1+\eta L_i)^s - 1- s \eta L_i}{\binom{s}{2}\pth{\eta L_i}^2}.
$
\end{lemma}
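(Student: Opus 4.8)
The claim bounds the accumulated drift of stochastic gradients along the $s$ local steps at client $i$ in round $t$. I will write $\bg_i^{(t,k)} \triangleq \nabla \ell_i(\x^{(t,k)}; \xi_i^t)$ for brevity (the sample $\xi_i^t$ is fixed throughout the round, so this is effectively a deterministic map of $\x^{(t,k)}$ that is still $L_i$-Lipschitz by Assumption~\ref{ass: 2 smmothness}). The natural approach is to control the deviation $\|\x^{(t,k)} - \x^{(t,0)}\|$ by a discrete Grönwall-type recursion, then feed that back into the Lipschitz bound on the gradient differences and sum over $k$.

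**Main steps.**

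First, from the local update rule $\x^{(t,k+1)} = \x^{(t,k)} - \eta \bg_i^{(t,k)}$ and the triangle inequality, I would derive the one-step bound $\|\x^{(t,k+1)} - \x^{(t,0)}\| \le \|\x^{(t,k)} - \x^{(t,0)}\| + \eta \|\bg_i^{(t,k)}\|$, and then bound $\|\bg_i^{(t,k)}\| \le \|\bg_i^{(t,0)}\| + L_i\|\x^{(t,k)} - \x^{(t,0)}\|$ via $L_i$-Lipschitzness. Writing $\delta_k \triangleq \|\x^{(t,k)} - \x^{(t,0)}\|$, this gives the linear recursion $\delta_{k+1} \le (1+\eta L_i)\delta_k + \eta\|\bg_i^{(t,0)}\|$ with $\delta_0 = 0$. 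Unrolling this geometric-type recursion yields $\delta_k \le \frac{(1+\eta L_i)^k - 1}{L_i}\|\bg_i^{(t,0)}\|$. Second, I bound the target quantity: $\big\|\sum_{k=0}^{s-1}[\bg_i^{(t,k)} - \bg_i^{(t,0)}]\big\| \le \sum_{k=0}^{s-1} L_i \delta_k \le \|\bg_i^{(t,0)}\| \sum_{k=0}^{s-1}[(1+\eta L_i)^k - 1]$. Third, I evaluate the sum: $\sum_{k=0}^{s-1}[(1+\eta L_i)^k - 1] = \frac{(1+\eta L_i)^s - 1}{\eta L_i} - s$. Finally, I rewrite this exactly as $\binom{s}{2}(\eta L_i)^2 \cdot \kappa_i$ where $\kappa_i \triangleq \frac{(1+\eta L_i)^s - 1 - s\eta L_i}{\binom{s}{2}(\eta L_i)^2}$, note $\frac{(1+\eta L_i)^s-1}{\eta L_i} - s = \frac{(1+\eta L_i)^s - 1 - s\eta L_i}{\eta L_i}$, and multiply/divide by $\eta L_i \binom{s}{2}$ to land on $\eta L_i \binom{s}{2} \kappa_i \|\bg_i^{(t,0)}\|$. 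Taking $\kappa \triangleq \max_i \kappa_i$ and recalling $\bg_i^{(t,0)} = \nabla \ell_i(\x^t; \xi_i^t)$ (since $\x^{(t,0)} = \x_i^t$ and — in the relevant regime — this equals $\x^t$; in any case the statement is phrased for the generic $\x^t$ argument) completes the bound.

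**Anticipated obstacle.**

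The only delicate points are bookkeeping rather than conceptual. One is making sure the algebraic identity $\sum_{k=0}^{s-1}[(1+a)^k - 1] = \frac{(1+a)^s - 1}{a} - s$ with $a = \eta L_i$ is handled cleanly, including the edge case $s=1$ where both sides are $0$ and $\binom{s}{2}=0$ (so $\kappa_i$ is a $0/0$ that should be read as its limiting value, or the case treated separately). The other is confirming $\kappa_i \ge 1$ — equivalently that the exact sum is at least $\binom{s}{2}(\eta L_i)^2$ — which follows from the convexity-type inequality $(1+a)^s \ge 1 + sa + \binom{s}{2}a^2$ for $a > 0$, $s \ge 2$; this is worth stating so that $\kappa$ is a genuine (finite, $\ge 1$) constant. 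I do not expect the Grönwall unrolling itself to pose any real difficulty.
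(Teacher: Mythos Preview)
Your proposal is correct and follows essentially the same route as the paper. The paper organizes the argument as an induction on $s$ applied directly to the cumulative sum $\bigl\|\sum_{k=0}^{s-1}[\nabla\ell_i(\x^{(t,k)})-\nabla\ell_i(\x^t)]\bigr\|$, while you first unroll a Gr\"onwall recursion on $\delta_k=\|\x^{(t,k)}-\x^{(t,0)}\|$ and then sum; both reach the identical intermediate bound $L_i\delta_k\le[(1+\eta L_i)^k-1]\|\nabla\ell_i(\x^{(t,0)})\|$ and hence the same final expression.
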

\begin{claim}
\label{claim: monotonicity}
For any $s\in \naturals$, 
    \begin{align*}
        \kappa &\triangleq \frac{(1+\eta L)^s - 1- s \eta L}{\binom{s}{2}\pth{\eta L}^2}
    \end{align*}
    is monotonic non-decreasing with respect to $\eta>0$. 
\end{claim}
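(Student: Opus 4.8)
The plan is to show that the ratio
\[
\kappa(\eta) = \frac{(1+\eta L)^s - 1 - s\eta L}{\binom{s}{2}(\eta L)^2}
\]
is non-decreasing in $\eta>0$ by reducing to a statement about a single scalar variable. First I would substitute $x \triangleq \eta L > 0$, so that it suffices to prove that
\[
g(x) \triangleq \frac{(1+x)^s - 1 - sx}{x^2}
\]
is monotone non-decreasing on $(0,\infty)$ (the constant $\binom{s}{2}$ is irrelevant, and $\eta \mapsto x$ is an increasing linear change of variables). The case $s=1$ is trivial since then the numerator of $g$ vanishes identically and $g\equiv 0$, and the case $s=0$ is vacuous; so I would assume $s\ge 2$.

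The cleanest route I see is a power-series / coefficientwise argument rather than brute-force differentiation. Expand $(1+x)^s$ by the binomial theorem: since $s\ge 2$,
\[
(1+x)^s - 1 - sx = \sum_{j=2}^{s} \binom{s}{j} x^{j},
\]
hence
\[
g(x) = \sum_{j=2}^{s} \binom{s}{j} x^{j-2} = \sum_{k=0}^{s-2} \binom{s}{k+2} x^{k}.
\]
This is a polynomial in $x$ with \emph{non-negative} coefficients (in fact strictly positive for $0\le k\le s-2$), so it is manifestly non-decreasing — indeed strictly increasing — on $(0,\infty)$, and $\eta\mapsto\kappa(\eta)$ inherits this since $x=\eta L$ is an increasing function of $\eta$. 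This handles general $s\in\naturals$ at once, including $s=1$ where the sum is empty and $\kappa\equiv 0$ is (weakly) non-decreasing as claimed.

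I do not anticipate a serious obstacle here; the only mild care needed is the degenerate small-$s$ bookkeeping (the $s=0,1$ cases where $\binom{s}{2}=0$ and the numerator also vanishes, so $\kappa$ should be read as $0$ by the usual limiting/empty-sum convention), and making sure the binomial identity is invoked with the correct index range so that no spurious $j=0,1$ terms appear. If one preferred a calculus proof instead, the alternative is to show $g'(x)\ge 0$, i.e. $x\big(s(1+x)^{s-1}-s\big) \ge 2\big((1+x)^s-1-sx\big)$; expanding both sides in the binomial series reduces this to the termwise inequality $j\binom{s}{j}\ge 2\binom{s}{j}$ for $j\ge 2$, which is immediate — but the coefficientwise argument above is shorter and avoids differentiation entirely.
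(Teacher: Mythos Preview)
Your proposal is correct and takes essentially the same approach as the paper: expand $(1+\eta L)^s$ via the binomial theorem, cancel the $1$ and $s\eta L$ terms, and observe that dividing by $(\eta L)^2$ leaves a polynomial in $\eta L$ with non-negative coefficients, hence non-decreasing in $\eta>0$. The only difference is cosmetic---you introduce the substitution $x=\eta L$ and discuss the degenerate $s\le 1$ cases explicitly, whereas the paper works directly with $\eta L$ and leaves those edge cases implicit.
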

\begin{remark}
Lemma \ref{lemma: local step perturbation} comes from a companion work.
It yields a simple upper bound on the perturbations incurred by multiple local steps.
For the special case when $s=1$, we simply have $\kappa=0$.
For $s\ge 2$, we always have $\kappa \ge 1,$
and furthermore $\kappa \le \frac{e^c-1-c}{c^2/2},$ when $\eta\le \frac{c}{sL},$
which follows from Claim \ref{claim: monotonicity}.
In other words, we can treat $\kappa$ as a constant as long as $\eta$ is sufficiently small. 
\end{remark}
Let 
\begin{align}
\label{eq: average estimate}
\bar{\x}^t \triangleq \frac{1}{m}\sum_{i=1}^m \x_i^t. 
\end{align}
\begin{lemma}[Descent Lemma]
\label{lemma: descent lemma}
Suppose Assumptions \ref{ass: 2 smmothness}, \ref{ass: bounded variance client-wise}, and \ref{ass: bounded similarity} hold, 
under a choice of the learning rate $\eta \le \frac{1}{2s},$ 
the following property holds for $t\ge 0:$
\begin{small}
\begin{align*}
&\expect{F(\bar{\x}^{t+1})  -  F(\bar{\x}^{t}) \mid \calF^{t}}\\ 
& \le - \sth{\frac{s\eta}{4} - 3 \eta^2 s^2 \pth{\beta^2 + 1}\qth{\kappa^2 L^2 + 2L \pth{1 + \frac{\kappa^2 L^2}{4}}}} \norm{\nabla F(\bar{\x}^t)}^2\\
\nonumber
& \qquad  +  3\xi^2 \eta^2 s^2 \qth{\kappa^2 L^2 + 2L \pth{1 + \frac{\kappa^2 L^2}{4}}} \\
& \qquad + \sigma^2 \eta^2 s^2 \qth{\kappa^2 L^2 + 2 L \pth{\frac{1}{m} + \frac{\kappa^2 L^2}{4}}}\\
& \qquad +  \sth{\eta s L^2 + 3\eta^2 s^2 L^2\qth{\kappa^2 L^2 + 2L \pth{1 + \frac{\kappa^2 L^2}{4}}}} \underbrace{\frac{1}{m} \sum_{i=1}^m \norm{\x_i^t - \bar{\x}^t}^2}_{\text{consensus error}}.
\end{align*}
\end{small}
\end{lemma}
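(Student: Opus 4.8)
The plan is to combine an $L$-smoothness descent step for $F$ with a decomposition of the accumulated local gradients, after first reducing everything to the dynamics of the running average $\bar{\x}^t$. The key structural observation is that the implicit-gossip step (lines 14--22 of Algorithm~\ref{alg: fedavg variant}) acts on the matrix whose columns are the post--local-update iterates $\x_i^{(t,s)}$ by multiplication with the doubly-stochastic $W^{(t)}$, and such multiplication preserves the average $\bar{\x}$. Hence the gossip leaves $\bar{\x}^{t+1}$ unchanged and
\begin{small}
\[
\bar{\x}^{t+1}=\bar{\x}^{t}-\eta_t\,\tfrac{1}{m}\textstyle\sum_{i=1}^m \bg_i^t,\qquad \bg_i^t\triangleq\sum_{k=0}^{s-1}\nabla\ell_i\pth{\x_i^{(t,k)};\xi_i^t}.
\]
\end{small}
Thus the non-uniform, time-varying activation influences only the individual $\x_i^t$ --- i.e.\ the consensus error $\tfrac1m\sum_i\norm{\x_i^t-\bar{\x}^t}^2$ appearing on the right-hand side --- and not the update rule for $\bar{\x}^t$.

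Next I would apply $L$-smoothness of $F$ to $\bar{\x}^{t+1},\bar{\x}^{t}$ and write $\bg_i^t=s\,\nabla\ell_i(\x_i^t;\xi_i^t)+\bm e_i^t$ with $\bm e_i^t\triangleq\sum_{k=0}^{s-1}\qth{\nabla\ell_i(\x_i^{(t,k)};\xi_i^t)-\nabla\ell_i(\x_i^t;\xi_i^t)}$, which is precisely the object bounded in Lemma~\ref{lemma: local step perturbation} applied with starting point $\x_i^t$, so $\norm{\bm e_i^t}\le\kappa\eta\binom{s}{2}L\norm{\nabla\ell_i(\x_i^t;\xi_i^t)}$ (and $\bm e_i^t=\bm{0}$ when $s=1$). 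Taking $\expect{\cdot\mid\calF^t}$ and using that $\x_i^t$ is $\calF^t$-measurable, Assumption~\ref{ass: bounded variance client-wise} gives $\expect{\nabla\ell_i(\x_i^t;\xi_i^t)\mid\calF^t}=\nabla F_i(\x_i^t)$, so the inner-product term becomes $-s\eta_t\langle\nabla F(\bar{\x}^t),\tfrac1m\sum_i\nabla F_i(\x_i^t)\rangle$ plus an $\bm e_i^t$-remainder handled by Cauchy--Schwarz and AM--GM. Substituting $\tfrac1m\sum_i\nabla F_i(\x_i^t)=\nabla F(\bar{\x}^t)+\tfrac1m\sum_i\pth{\nabla F_i(\x_i^t)-\nabla F_i(\bar{\x}^t)}$ and invoking $L_i$-Lipschitzness (Assumption~\ref{ass: 2 smmothness}) extracts the clean descent term $-s\eta_t\norm{\nabla F(\bar{\x}^t)}^2$ together with a $\tfrac1m\sum_i\norm{\x_i^t-\bar{\x}^t}^2$ contribution.

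For the quadratic term $\tfrac{L\eta_t^2}{2}\expect{\norm{\tfrac1m\sum_i\bg_i^t}^2\mid\calF^t}$ I would separate the $s\,\nabla\ell_i(\x_i^t;\xi_i^t)$ part from the $\bm e_i^t$ part; in the former, split each stochastic gradient into its conditional mean $\nabla F_i(\x_i^t)$ plus zero-mean noise, which is conditionally independent across $i$ since the samples are fresh, so the $m$-fold average of the noise has conditional second moment at most $\sigma^2/m$ --- the source of the $1/m$ in the $\sigma^2$ term. Every remaining piece, including the $\bm e_i^t$-cross-term above and $\norm{\bm e_i^t}^2\le\kappa^2\eta^2\binom{s}{2}^2 L^2\norm{\nabla\ell_i(\x_i^t;\xi_i^t)}^2$, is absorbed through the uniform bound
\begin{small}
\[
\tfrac1m\textstyle\sum_i\expect{\norm{\nabla\ell_i(\x_i^t;\xi_i^t)}^2\mid\calF^t}\le 4\sigma^2+4L^2\cdot\tfrac1m\textstyle\sum_i\norm{\x_i^t-\bar{\x}^t}^2+4\zeta^2+4(\beta^2+1)\norm{\nabla F(\bar{\x}^t)}^2,
\]
\end{small}
obtained by decomposing $\nabla\ell_i(\x_i^t;\xi_i^t)$ into (noise)$+$(local smoothness drift)$+$(heterogeneity)$+\nabla F(\bar{\x}^t)$ and using Assumptions~\ref{ass: 2 smmothness}, \ref{ass: bounded variance client-wise}, \ref{ass: bounded similarity}. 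Collecting all contributions, using $\binom{s}{2}\le s^2/2$, treating $\kappa$ as a constant via Claim~\ref{claim: monotonicity}, and invoking $\eta\le\tfrac1{2s}$ (hence $\eta s\le\tfrac12$, which demotes the stray $\eta^3,\eta^4$ factors created by the $\bm e_i^t$ terms down to order $\eta^2 s^2$) reproduces the stated inequality: the clean $-s\eta\norm{\nabla F(\bar{\x}^t)}^2$ loses $\tfrac{3s\eta}{4}$ to the AM--GM cross-terms, leaving the $-\tfrac{s\eta}4$ coefficient, and the leftover terms form the $O(\eta^2 s^2)$ corrections and the consensus-error coefficient $\eta s L^2+O(\eta^2 s^2)$.

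I expect the main obstacle to be the treatment of $\bm e_i^t$: since the local sample $\xi_i^t$ is reused across all $s$ steps (Assumption~\ref{ass: bounded variance client-wise} only guarantees unbiasedness at the \emph{start} of a round), $\expect{\bm e_i^t\mid\calF^t}\ne\bm{0}$ and the perturbation cannot simply be discarded. Controlling it without any bounded-gradient assumption --- by routing $\norm{\nabla\ell_i(\x_i^t;\xi_i^t)}$ through smoothness and the relaxed dissimilarity Assumption~\ref{ass: bounded similarity} back into $\norm{\nabla F(\bar{\x}^t)}^2$, the consensus error, $\sigma^2$ and $\zeta^2$ --- is the delicate step; by comparison, the doubly-stochastic reduction to the average dynamics is conceptually central but short.
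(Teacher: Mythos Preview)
Your proposal is correct and follows essentially the same route as the paper: reduce to the average dynamics via the doubly-stochastic $W^{(t)}$, apply $L$-smoothness, split $\bg_i^t = s\,\nabla\ell_i(\x_i^t;\xi_i^t)+\bm e_i^t$, control $\bm e_i^t$ via Lemma~\ref{lemma: local step perturbation}, and feed everything through Assumptions~\ref{ass: 2 smmothness}--\ref{ass: bounded similarity}. The only substantive deviation is your four-term split of $\nabla\ell_i(\x_i^t;\xi_i^t)$ with factor~$4$; the paper instead first uses unbiasedness to get $\expect{\norm{\nabla\ell_i(\x_i^t;\xi_i^t)}^2\mid\calF^t}\le\sigma^2+\norm{\nabla F_i(\x_i^t)}^2$ and then a three-term split (Proposition~\ref{proposition: average gradient to global gradient}) with factor~$3$, which is what produces the exact constants $3\eta^2 s^2(\beta^2+1)[\,\cdots]$ and the $\sigma^2$ coefficient with $1/m$ rather than $4/m$ in the statement---so to literally ``reproduce the stated inequality'' you should use that sharper two-step bound rather than the direct four-way split.
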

\begin{remark}
Lemma \ref{lemma: descent lemma} can be proved via following the standard outline of SGD convergence analysis with non-convex functions and plugging in Lemma \ref{lemma: local step perturbation} to bound the perturbation arises from multiple local updates and non-fresh data per update. 
The consensus error term comes from Assumption \ref{ass: 2 smmothness} and enables us to connect 
our analysis of the aforementioned $W$ matrix, where we borrow the insights from the analysis of gossiping algorithms.
Formally, in matrix form, we use the following notions
\begin{align*}
\bm{X}^{(t)} & = \qth{\x_1^t, \cdots, \x_m^t};   \\
\bm{G}_0^{(t)} &= s\qth{\nabla \ell_1(\x_1^{(t,0)}), \cdots,  s\nabla \ell_m(\x_m^{(t,0)})};\\
\bm{G}^{(t)} & = \qth{\sum_{r=0}^{s-1}\nabla \ell_1(\x_1^{(t,r)}), \cdots,  \sum_{r=0}^{s-1}\nabla \ell_m(\x_m^{(t,r)})} ;\\
\nabla \bm{F}^{(t)} & = \qth{\nabla F_1(\x_1^t), \cdots,  \nabla F_m(\x_m^t)}. 
\end{align*}
Equivalently, we can write down the consensus error in matrix form,
\begin{align*}
\sum_{i=1}^m \norm{\bar{\x}^{t} - \x_i^t}^2 
&= \fnorm{\bm{X}^{(t)} \pth{\identity - \allones}}^2\\
&= \fnorm{\pth{\bm{X}^{(t-1)} - \eta \bm{G}^{(t-1)}} W^{(t-1)} \pth{\identity - \allones}}^2 \\
&= \eta^2 \fnorm{\sum_{q=0}^{t-1} \bm{G}^{(q)} \pth{\Pi_{l=q}^{t-1} W^{(q)} - \allones}}^2,
\end{align*}
where the last follows from the fact that all clients are initiated at the same weights.
\end{remark}

\begin{lemma}[Consensus Error]
\label{lemma: consensus}
Suppose the conditions in Lemma \ref{lemma: descent lemma} are met,
under a choice of the learning rate,
$\eta \le \min\sth{\frac{1}{2s}, \frac{\sqrt{2}}{\kappa sL},\frac{1-\sqrt{\rho}}{6\sqrt{2\rho}Ls^2}}$
the following property holds,
\begin{align*}
&\frac{1}{mT}\sum_{t=0}^{T-1}\expect{\fnorm{\bm{X}^{(t)} \pth{\identity - \allones}^2}}\\
& \le 6 s^2 \eta^2 \sigma^2 \qth{\frac{2\rho }{\pth{1-\sqrt{\rho}}^2}
+ {\frac{\rho }{1-\rho} }}+\frac{72 \xi^2 \eta^2 s^4 \rho}{\pth{1-\sqrt{\rho}}^2}\\
&~ +\frac{72 \pth{\beta^2 + 1} \eta^2 s^4 \rho}{\pth{1-\sqrt{\rho}}^2}\frac{1}{T}\sum_{t=0}^{T-1}\expect{\norm{\nabla F(\bar{\x}^t)}^2}.
\end{align*}
\end{lemma}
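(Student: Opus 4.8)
\noindent\emph{Proof strategy.}
The plan is to convert the matrix recursion $\bm{X}^{(t+1)}=(\bm{X}^{(t)}-\eta\bm{G}^{(t)})W^{(t)}$ into one that contracts at a rate governed by $\rho$, and then to absorb the consensus-error term that inevitably reappears on the right-hand side by invoking the learning-rate bound. As recorded in the remark after Lemma~\ref{lemma: descent lemma}, unrolling the recursion and using that all clients are initialized at the common $\x^0$ (so $\bm{X}^{(0)}(\identity-\allones)=\bm{0}$) gives
\begin{align*}
\fnorm{\bm{X}^{(t)}(\identity-\allones)}^2=\eta^2\fnorm{\sum_{q=0}^{t-1}\bm{G}^{(q)}\pth{\textstyle\prod_{l=q}^{t-1}W^{(l)}-\allones}}^2.
\end{align*}
I would split each $\bm{G}^{(q)}$ into three blocks: the stochastic-gradient noise $\bm{N}^{(q)}$, whose $i$-th column is $s\pth{\nabla\ell_i(\x_i^q;\xi_i^q)-\nabla F_i(\x_i^q)}$; the mean-gradient block $s\nabla\bm{F}^{(q)}$; and the multi-local-step drift $\bm{G}^{(q)}-\bm{G}_0^{(q)}$, and then use $\fnorm{a+b+c}^2\le 3(\fnorm{a}^2+\fnorm{b}^2+\fnorm{c}^2)$ so that the three contributions can be estimated separately.

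For the noise block, conditioning on $\calF^q$ makes $\bm{N}^{(q)}$ zero-mean and independent of the mixing matrices at rounds $\ge q$ (assuming channel randomness is independent of the data sampling), so the cross terms of $\fnorm{\sum_q\bm{N}^{(q)}(\prod_l W^{(l)}-\allones)}^2$ vanish in expectation, and Lemma~\ref{lemma: average mixing} applied conditionally yields $\expect{\fnorm{\cdot}^2}\le\sum_{q<t}\rho^{t-q}\expect{\fnorm{\bm{N}^{(q)}}^2}\le m s^2\sigma^2\sum_{q<t}\rho^{t-q}$, where the last step uses Assumption~\ref{ass: bounded variance client-wise}; bounding the geometric series produces the $\tfrac{\rho}{1-\rho}$ factor. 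The remaining two blocks are not martingale differences, so I would instead use the weighted Cauchy--Schwarz inequality $\fnorm{\sum_q v_q}^2\le(\sum_q a_q)\sum_q a_q^{-1}\fnorm{v_q}^2$ with geometric weights $a_q=(\sqrt\rho)^{t-q}$, again combined with Lemma~\ref{lemma: average mixing} applied after conditioning on $\calF^q$ and the round-$q$ samples (with respect to which both blocks are measurable). This reduces each contribution to $\tfrac{\sqrt\rho}{1-\sqrt\rho}\sum_{q<t}(\sqrt\rho)^{t-q}\expect{\fnorm{\cdot}^2}$, and a second summation over $t$ gives the $\tfrac{\rho}{(1-\sqrt\rho)^2}$ factors.

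It then remains to bound $\expect{\fnorm{s\nabla\bm{F}^{(q)}}^2}$ and $\expect{\fnorm{\bm{G}^{(q)}-\bm{G}_0^{(q)}}^2}$. For the drift, Lemma~\ref{lemma: local step perturbation} gives $\fnorm{\bm{G}^{(q)}-\bm{G}_0^{(q)}}^2\le\kappa^2\eta^2\binom{s}{2}^2 L^2\sum_i\norm{\nabla\ell_i(\x_i^q;\xi_i^q)}^2$, and $\eta\le\tfrac{\sqrt2}{\kappa sL}$ shrinks the prefactor to at most $\tfrac{s^2}{2}$; taking expectations via Assumption~\ref{ass: bounded variance client-wise} turns both quantities into $O(ms^2\sigma^2)$ plus $O(s^2)\sum_i\norm{\nabla F_i(\x_i^q)}^2$. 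Then $L$-smoothness, $\norm{\nabla F_i(\x_i^q)}^2\le 2L^2\norm{\x_i^q-\bar\x^q}^2+2\norm{\nabla F_i(\bar\x^q)}^2$, together with Assumption~\ref{ass: bounded similarity}, $\tfrac1m\sum_i\norm{\nabla F_i(\bar\x^q)}^2\le 2(\beta^2+1)\norm{\nabla F(\bar\x^q)}^2+2\zeta^2$, expresses everything through $\sigma^2$, $\zeta^2$, $\norm{\nabla F(\bar\x^q)}^2$, and the consensus error $\tfrac1m\fnorm{\bm{X}^{(q)}(\identity-\allones)}^2$ itself. Dividing by $mT$, summing over $t$, and collecting constants yields an inequality of the form
\begin{align*}
\frac{1}{mT}\sum_{t=0}^{T-1}\expect{\fnorm{\bm{X}^{(t)}(\identity-\allones)}^2}
&\le A+B\cdot\frac1T\sum_{t=0}^{T-1}\expect{\norm{\nabla F(\bar\x^t)}^2}\\
&\quad+C\cdot\frac{1}{mT}\sum_{t=0}^{T-1}\expect{\fnorm{\bm{X}^{(t)}(\identity-\allones)}^2},
\end{align*}
where $A$ collects the $\sigma^2,\zeta^2$ terms, $B$ is the $(\beta^2+1)$-coefficient, and $C\propto\eta^2 s^4 L^2\rho/(1-\sqrt\rho)^2\le\tfrac12$ exactly because $\eta\le\tfrac{1-\sqrt\rho}{6\sqrt{2\rho}Ls^2}$; rearranging and using $\tfrac1{1-C}\le 2$ gives the claimed bound (this factor $2$ is what turns the internal $36$ into the $72$ in the statement).

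The step I expect to be the main obstacle is the two-fold coupling. First, the mixing matrices are random: they are mutually independent across rounds, but each $\bm{G}^{(q)}$ depends on the earlier $W^{(l)}$'s through $\x_i^q$, so Lemma~\ref{lemma: average mixing} may be invoked only after the correct conditioning, and the noise cross-terms must be killed by a separate measurability/independence argument. Second, the consensus error is genuinely self-referential, entering the right-hand side via $\norm{\nabla F_i(\x_i^q)}^2$, so the entire chain of estimates has to be organized so that the coefficient $C$ of that term stays below one --- which is precisely what the third branch of the learning-rate constraint secures. Everything else, in particular tracking the constants so that they collapse to the stated $6$, $72$, $\tfrac{2\rho}{(1-\sqrt\rho)^2}$, is mechanical.
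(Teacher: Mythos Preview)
Your proposal is correct and follows essentially the same route as the paper: the same three-way decomposition $\bm{G}^{(q)}=(\bm{G}^{(q)}-\bm{G}_0^{(q)})+(\bm{G}_0^{(q)}-s\nabla\bm{F}^{(q)})+s\nabla\bm{F}^{(q)}$, the martingale argument for the noise block yielding the $\tfrac{\rho}{1-\rho}$ factor, Lemma~\ref{lemma: local step perturbation} for the drift block, Proposition~\ref{proposition: average gradient to global gradient} (your $L$-smoothness plus Assumption~\ref{ass: bounded similarity} step) for $\fnorm{\nabla\bm{F}^{(q)}}^2$, and the self-referential absorption via $\tfrac{36L^2\eta^2 s^4\rho}{(1-\sqrt\rho)^2}\le\tfrac12$. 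The only cosmetic difference is that for the two non-martingale blocks the paper expands the square and bounds each cross term by AM--GM with weight $\epsilon=\rho^{(q-p)/2}$, whereas you use the equivalent weighted Cauchy--Schwarz $\fnorm{\sum_q v_q}^2\le(\sum_q a_q)\sum_q a_q^{-1}\fnorm{v_q}^2$ with $a_q=(\sqrt\rho)^{t-q}$; both produce the same $\tfrac{\rho}{(1-\sqrt\rho)^2}$ decay after the second summation.
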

Now, we are ready to present our main theorem.
\begin{theorem}
\label{thm: main}
Suppose all the assumptions hold,
and choose a learning rate $\eta = c \sqrt{\frac{m}{sT}}$ for sufficiently large $T$ such that %
\begin{align*}
\eta \le \min \left\{\frac{1}{24\pth{\beta^2 + 1}\mathfrak{C} \qth{1 + \frac{144 s^2 \rho}{\kappa^2 \pth{1-\sqrt{\rho}}^2}}+\frac{1152 \pth{\beta^2 + 1} L s^2 \rho}{\kappa\pth{1-\sqrt{\rho}}^2}},\right.&\\
\left.\frac{1}{2s}, 
\frac{\sqrt{2}}{\kappa sL},\frac{1}{\rho s^3},
\frac{1-\sqrt{\rho}}{6\sqrt{2\rho}Ls^2}\right\}&,
\end{align*}
the following property holds for Algorithm \ref{alg: fedavg variant}

\begin{align*}
&\frac{1}{T}\sum_{k=0}^{T-1}\expect{\norm{\nabla F(\bar{\x}^t)}^2} 
\le O \pth{\frac{8F(\bar{\x}^0) - 8 F^\star }{\sqrt{msT}} }\\
&~+\underbrace{O \pth{  16 L \sqrt{\frac{s}{mT}} \sigma^2
 +
 8 \sqrt{\frac{ms}{T}} \kappa^2 L^2
\pth{1 + \frac{L}{2}} 
 \sigma^2
 }}_{\text{Stochastic gradient noise}}\\ 
&~+\underbrace{O \pth{ 24 \sqrt{\frac{ms}{T}} \qth{
\mathfrak{C}
+ \frac{24 L^2}{\pth{1-\sqrt{\rho}}^2}
} \xi^2 
+\frac{1728 \mathfrak{C} L^2 \xi^2 }{\pth{1-\sqrt{\rho}}^2} \frac{ms}{T}}
}_{\text{Client drift error}}\\
&~+ \underbrace{O \pth{\frac{144  \rho  }{\pth{1-\sqrt{\rho}}^2}   \pth{ L^2 + \frac{3\sqrt{2} L \mathfrak{C}}{\kappa}}\sigma^2 \frac{ms}{T}}}_{\text{Intermittent participation error}},\\
\end{align*}
where
$\mathfrak{C} \triangleq \kappa^2 L^2 + 2 L \pth{1 + \frac{\kappa^2 L^2}{4}}.$
\end{theorem}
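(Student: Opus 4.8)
The plan is to combine the one-step Descent Lemma (Lemma~\ref{lemma: descent lemma}) with the averaged Consensus Error bound (Lemma~\ref{lemma: consensus}), resolve the mutual coupling between the consensus error and the squared gradient norm, and finally optimize the step size. First I would take total expectations in Lemma~\ref{lemma: descent lemma}, sum over $t=0,\dots,T-1$, telescope the left-hand side using $\sum_t \expect{F(\bar{\x}^{t+1})-F(\bar{\x}^{t})} = \expect{F(\bar{\x}^{T})}-F(\bar{\x}^{0}) \ge F^\star - F(\bar{\x}^{0})$ (Assumption~\ref{ass: lower bounds}), and divide by $T$. Since $\eta$ is constant, the coefficient of the consensus-error term can be pulled out of the time average, giving a bound of the form
\[
a_\eta\cdot\frac1T\sum_{t=0}^{T-1}\expect{\norm{\nabla F(\bar{\x}^t)}^2}
\le \frac{F(\bar{\x}^{0})-F^\star}{T} + b_\eta\,\sigma^2 + c_\eta\,\xi^2
+ d_\eta\cdot\frac1{mT}\sum_{t=0}^{T-1}\expect{\fnorm{\bm{X}^{(t)}\pth{\identity-\allones}}^2},
\]
where $a_\eta = \frac{s\eta}{4} - 3\eta^2 s^2(\beta^2+1)\mathfrak{C}$, $d_\eta = \eta s L^2 + 3\eta^2 s^2 L^2\mathfrak{C}$, and $b_\eta,c_\eta$ collect the stochastic-noise and client-drift constants from the descent lemma.

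Next I would substitute Lemma~\ref{lemma: consensus} for the consensus-error term. Because that bound again contains $\frac1T\sum_t\expect{\norm{\nabla F(\bar{\x}^t)}^2}$ (with coefficient $\frac{72(\beta^2+1)\eta^2 s^4\rho}{(1-\sqrt\rho)^2}$), this produces a feedback term $d_\eta\cdot\frac{72(\beta^2+1)\eta^2 s^4\rho}{(1-\sqrt\rho)^2}$ on the right that must be moved to the left. The net coefficient of $\frac1T\sum_t\expect{\norm{\nabla F(\bar{\x}^t)}^2}$ then becomes $a_\eta - d_\eta\cdot\frac{72(\beta^2+1)\eta^2 s^4\rho}{(1-\sqrt\rho)^2}$, and the role of the (messy) first entry of the $\min$ defining the admissible $\eta$ is precisely to force this net coefficient to stay above a fixed positive fraction of $s\eta$: it simultaneously guarantees $3\eta s(\beta^2+1)\mathfrak{C}\le\frac18$, so $a_\eta\ge\frac{s\eta}{8}$, and that the feedback coefficient is at most $\frac{s\eta}{16}$, so the net coefficient is $\ge\frac{s\eta}{16}$. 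The auxiliary thresholds $\eta\le\frac1{2s},\ \eta\le\frac{\sqrt2}{\kappa s L},\ \eta\le\frac1{\rho s^3},\ \eta\le\frac{1-\sqrt\rho}{6\sqrt2\rho L s^2}$ are inherited from the hypotheses of Lemmas~\ref{lemma: descent lemma} and~\ref{lemma: consensus} and are additionally used to simplify $d_\eta\le C\eta s L^2$ and to convert stray $\kappa^{-1}$ factors (using $\kappa\ge1$ for $s\ge2$, with the $s=1$, $\kappa=0$ case trivial, and Claim~\ref{claim: monotonicity}/the surrounding remark to treat $\kappa$ as bounded where a numeric constant is wanted).

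Finally I would divide through by $\frac{s\eta}{8}$ (or $\frac{s\eta}{16}$): the leading term becomes $O\!\big(\tfrac{F(\bar{\x}^0)-F^\star}{s\eta T}\big)$, the $\sigma^2,\xi^2$ terms from the descent lemma pick up a factor $\tfrac1{s\eta}b_\eta=O(\eta s)$ or $O(\eta s/m)$, and the consensus-induced $\sigma^2,\xi^2$ pieces (which scale like $\eta^2 s^2$ and $\eta^2 s^4$) pick up factors $\propto\tfrac1{s\eta}\cdot\eta^3 s^3 = \eta^2 s^2$. Substituting $\eta = c\sqrt{m/(sT)}$ gives $s\eta T = c\sqrt{msT}$, $\eta s = c\sqrt{ms/T}$, $\eta s/m = c\sqrt{s/(mT)}$, and $\eta^2 s^2 = c^2 ms/T$, which reproduces exactly the four groups in the statement: the optimization term $O\!\big((F(\bar{\x}^0)-F^\star)/\sqrt{msT}\big)$, the stochastic-gradient-noise term (note the $\sqrt{s/(mT)}$ factor coming from the $2L/m$ variance term and the $\sqrt{ms/T}$ factor from the $\kappa^2 L^2$ term), the client-drift term, and the intermittent-participation term carrying the $\frac{\rho}{(1-\sqrt\rho)^2}$ factor from Lemma~\ref{lemma: consensus}.

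The main obstacle is the middle step: the consensus error feeds into the descent inequality through the $L$-smoothness term, while the gradient norm feeds back into the consensus bound through the client-drift source, so the argument closes only if $\eta$ is small enough that the combined feedback coefficient stays strictly below the negative drift coefficient. Pinning down the explicit threshold (the first entry of the $\min$) and then carrying all constants cleanly through the division by $s\eta/8$ and the substitution $\eta=c\sqrt{m/(sT)}$ is the bulk of the work; everything else is routine telescoping on top of the already-established mixing estimates (Lemmas~\ref{lemma: ergodicity} and~\ref{lemma: average mixing}) that underlie Lemma~\ref{lemma: consensus}.
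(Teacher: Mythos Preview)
Your proposal is correct and follows essentially the same route as the paper: telescope the Descent Lemma, plug in the Consensus Error bound, absorb the gradient-norm feedback into the left-hand side using the step-size constraints (the paper shows the net coefficient is $\ge s\eta/8$ rather than $s\eta/16$, but that is only bookkeeping), then divide through and substitute $\eta=c\sqrt{m/(sT)}$. The structure, the role of each entry in the $\min$, and the identification of the four resulting error groups all match the paper's argument.
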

\begin{remark}
Here, we remark on Theorem \ref{thm: main}:
\begin{enumerate}
    \item {\bf On the structures.} 
    Except for the first term, the remained terms can be grouped into 
    three parts: the noise introduced by stochastic gradient, and the errors due to client drift (heterogeneity) and intermittent participation, each scaling with a different rate. 
    To control the errors, we need a sufficiently small learning rate $\eta$ that meets all the conditions mentioned above.
    \item {\bf On stationary points of $F$.} 
    Theorem \ref{thm: main} says that $\bar{\x}^t$ in {FedPBC} converges to a stationary point of $F$ asymptotically. 
    In other words, the bias will be corrected towards the end. 
    In contrast, we show in Proposition \ref{proposition: nonuniform} that $\bar{\x}^t$ in {FedAvg} converges to a point that could be arbitrarily far away from the true optimum depending on $p_i^t$ and data heterogeneity.
    \item {\bf On the role of the activation lower bound $c$.} Recall that it has been shown in Lemma \ref{lemma: ergodicity} that $\rho \le 1 - \frac{c^4\qth{1-\pth{1-c}^m}^2}{8}.$ A greater $c$ leads to a smaller $\rho$ and thus a tighter bound on $\frac{1}{T}\sum_{t=0}^{T-1} \expect{\norm{\nabla F \pth{\bar{\x}^t}}}.$ 
    Note that {FedPBC} reduces to {FedAvg} with full-client participation when $c=1$. In that case, our convergence rate $O\pth{\frac{1}{\sqrt{msT}}} + O\pth{\sqrt{\frac{ms}{T}}} + O\pth{\frac{ms}{T}}$ matches the {FedAvg} literature  (\eg, in \cite{wang2020tackling}). 
    We further note that because $\frac{\kappa}{s}$ can be treated as a constant, the convergence rate remains. 
    \item {\bf On linear speedup.} It is trivial to see that the first two terms dominate when $T$ is sufficiently large (\eg, $T\ge c_0 m^3 s^3,$ where $c_0$ is some positive constant.) We shall see linear speedup w.r.t. the first term; however, the second term ultimately dominates all. Thus, it is unlikely that our algorithm achieves linear speedup, which is consistent with {FedAvg} literature, \eg, in \cite{Li2020}.
\end{enumerate}
\end{remark}
\section{Numerical Experiments}
\label{sec: numerical}
In this section, we present the numerical evaluations of the proposed algorithm and {FedAvg}. 
In each round, the PS will send an update request to each client. %
Client $i$ will respond with probability $p_i,$ which is unknown to both the PS and clients. This simulates unstable communications.
\newline
\noindent{\bf Counterexample.} 
Here, we have $m = 100$ clients, each doing $30$-steps local computations, 
communicating for $2000$ rounds,
and holding a local loss function $F_i(\x_i) = \frac{1}{2}\norm{\x_i - \bu_i}^2,$
where $\x_i, \bu_i \in \reals^{100},$ 
$\bu_i\sim \calN \pth{ i \Indc, 0.01 \identity },$
and $\x_i^0=\bm{0}$
for all $i\in[m].$
The learning rate $\eta = 0.0003.$
In addition, we let the first $50$ clients respond with probability $p_0,$ whereas the second half with $p_1$ (to be specified later.)
\setlength{\parskip}{0mm}
\begin{figure}[!htb]
\begin{subfigure}[b]{\columnwidth}
\centering
\includegraphics[width=\columnwidth,trim={0 0 0 0.1cm},clip]{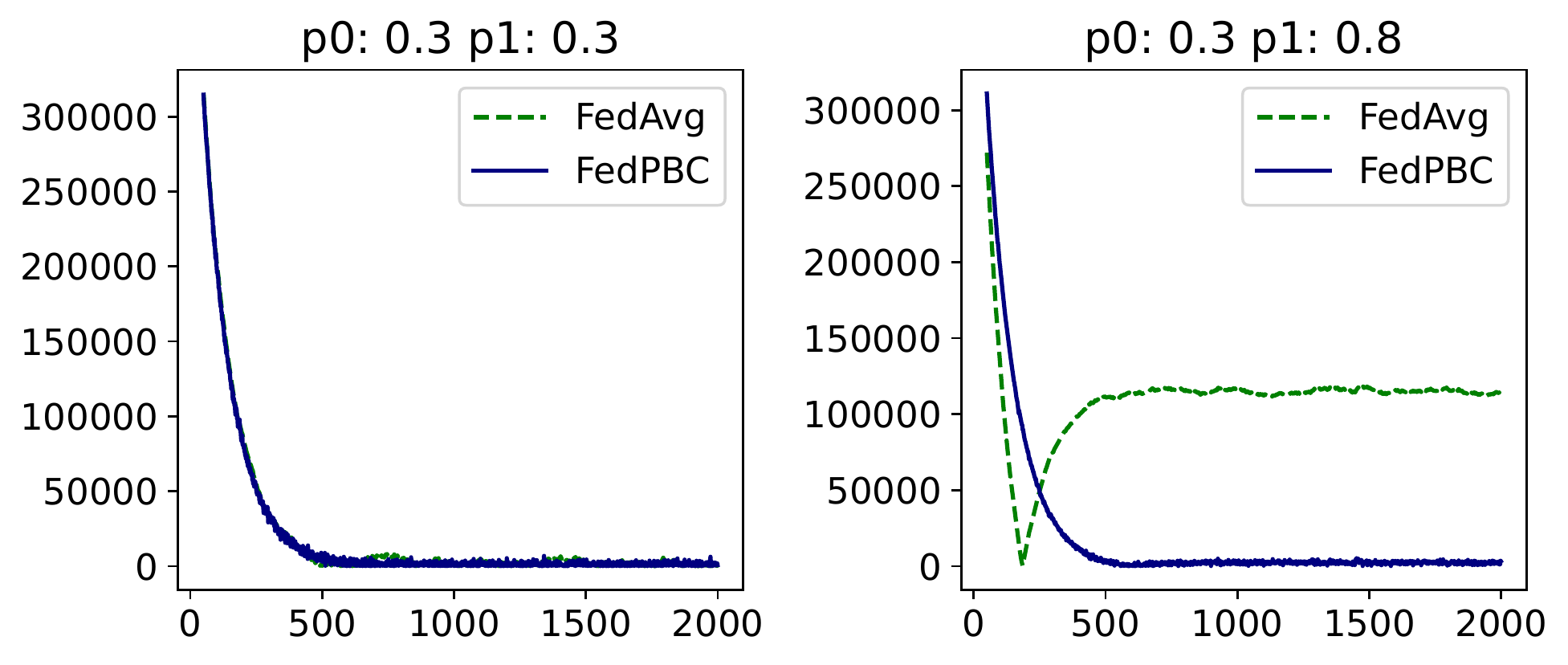}
\vskip -1pt
\caption{Always local computations}
\label{fig: always counter}
\end{subfigure}
\begin{subfigure}[b]{\columnwidth}
\centering
\includegraphics[width=\columnwidth]{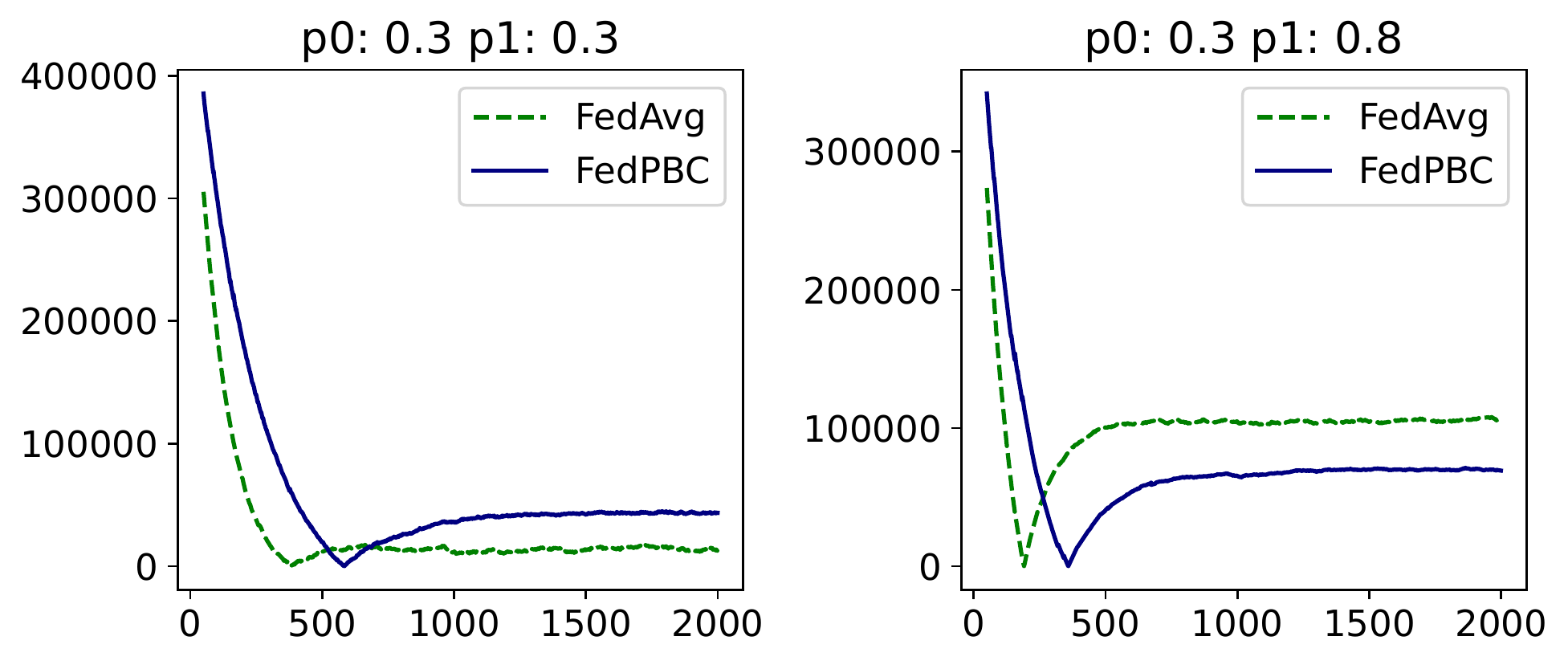}
\vskip -1pt
\caption{Sampled local computations}
\label{fig: sampled counter}
\end{subfigure}
\vskip -1pt
\caption{$\norm{\nabla F \pth{\bar{\x}}}$ evaluation of the counterexample.}
\label{fig: counterexample}
\end{figure}
\setlength{\parskip}{0mm}
\newline
For ease of presentation, we plot the the magnitude of $\norm{\nabla F \pth{\bar{\x}}}$ after the first $50$ communication rounds in Fig.\,\ref{fig: counterexample}. 
Clearly, {FedPBC} is unbiased and converges to the global optimum $\frac{1}{m}\sum_{i=1}^m \bu_i$ in all the combinations of $p_0$ and $p_1$ as suggested by Fig.\,\ref{fig: always counter}, while {FedAvg} will instead converge to a different point seen from non-zero $\nabla F\pth{\bar{\x}}$ when $p_0 \neq p_1$. When $p_0=p_1,$ the two algorithms will converge to the same point, which is the global minimizer and matches our analysis.
In a sharp contrast, if we let only the sampled clients do local computations, the bias persists, which we leave as a future direction.
\begin{figure}[!htb]
\begin{subfigure}[b]{0.49\columnwidth}
\centering
\includegraphics[width=\columnwidth,trim={0 0 0 0.1cm},clip]{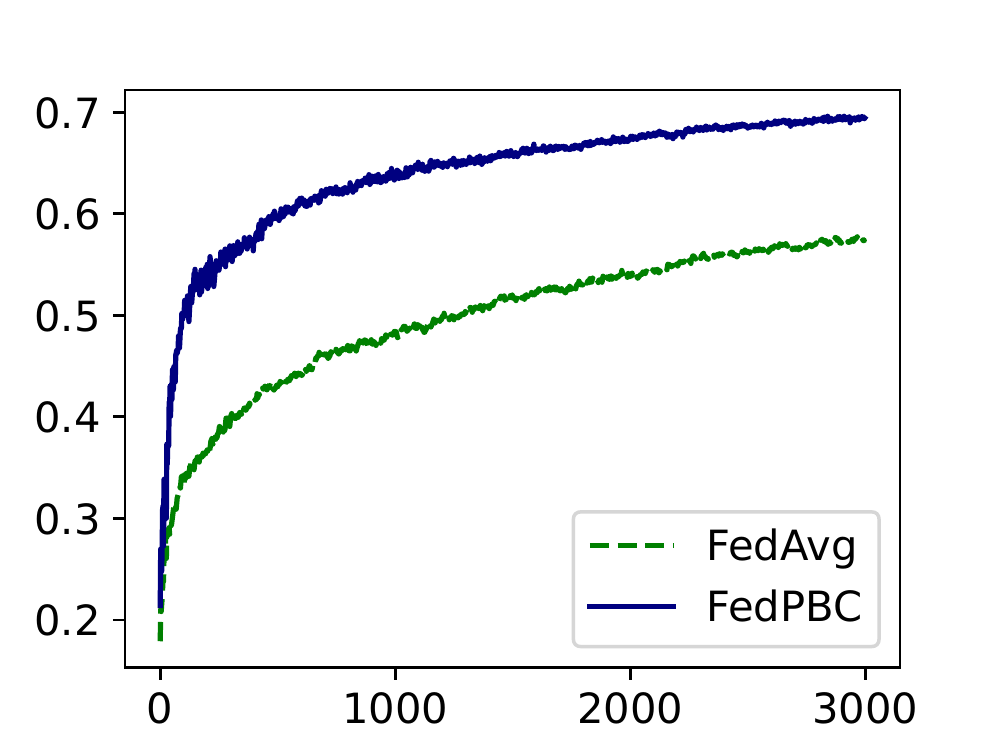}
\vskip -1pt
\caption{Test accuracy}
\label{fig: test acc}
\end{subfigure}
\begin{subfigure}[b]{0.49\columnwidth}
\centering
\includegraphics[width=\columnwidth]{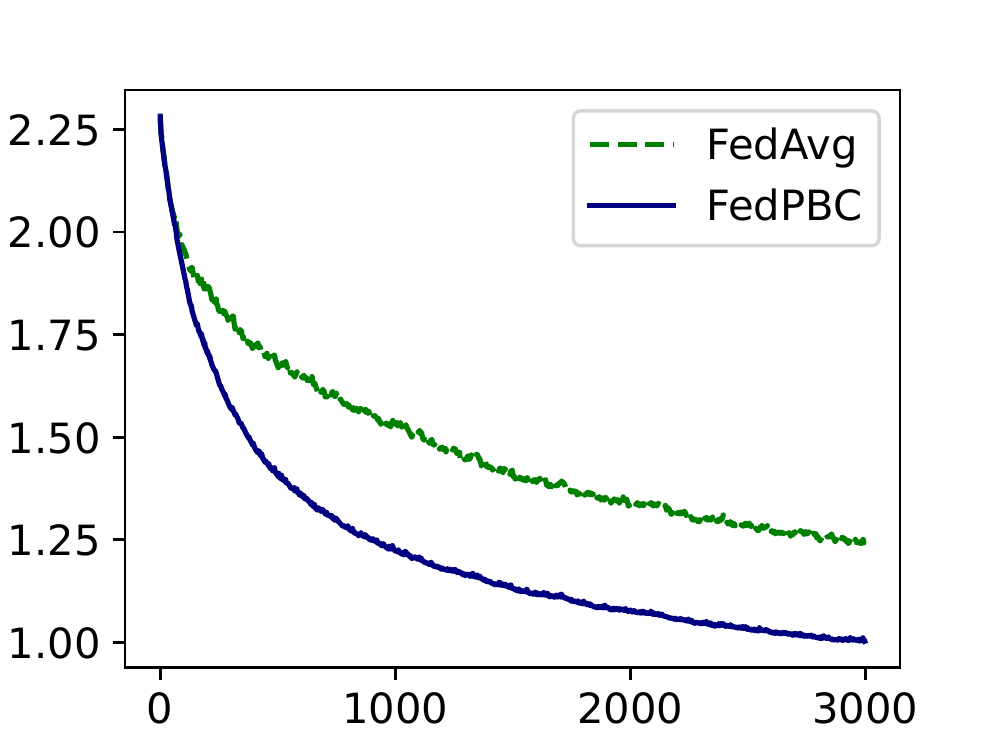}
\vskip -1pt
\caption{Train loss}
\label{fig: train loss}
\end{subfigure}
\vskip -1pt
\caption{Synthetic $(1,1)$ evaluations.}
\label{fig: synthetic}
\end{figure}
\setlength{\parskip}{0mm}
\newline
{\bf Synthetic $(1,1)$ data .}
In this simulation, we first follow \cite{li2020federated} and construct Synthetic $(1,1)$ dataset as follows: we generate samples $(X_i,Y_i)$ for each client $i$ according to the model $y=\argmax \pth{\text{softmax}\pth{W x + b}},$ where $x\in\reals^{60},$ $W\in\reals^{10\times 60},$ $b\in\reals^{10}.$ 
To characterize the non-\iid data, we let $W_i \sim N\pth{ u_i,1},~b_i\sim N\pth{u_i,1},~u_i\sim N\pth{0,\alpha=1},$ and $x_i\sim \calN\pth{v_i,\Sigma},$ where the covariance matrix is diagonal with $\sum_{j,j}=j^{-1.2}.$
Each element in the mean vector $v_i$ is drawn from $N\pth{B_i,1},$ where $B_i\sim N\pth{0,\beta=1}.$

For the non-uniform link activation probabilities $p_i$s, let
$
\prob{Z^t=k} \triangleq  \frac{k^{-a}}{\zeta\pth{a}},
$
where $k\ge 1,~a = 3,$ and $\zeta$ is the Riemann Zeta function.

Define $P_i^t \triangleq \sum_{i=1}^n \indc{Z^t = i},$
where $n = 20000,$
and let $p_i^t \triangleq \frac{P_i^t}{\sum_{i=1}^m P_i^t}$
in each communication round $t.$ 
This makes $p_i^t$ highly non-uniform among different workers.
We note that $p_i^t$s are generated per communication round $t$ and thus {\it time-varying.}
All $p_i^t$s are further clipped to make sure a lower bound $c=0.1$ is met.
The other auxiliary hyper-parameters are set as:
client size $m=150,~\eta = 0.005,$ batch size: $32,$ local computation rounds: $10$ for each $i\in[m],$ communication rounds: $3000.$
Fig.\,\ref{fig: synthetic} shows that {FedPBC} consistently outperforms {FedAvg}. %

\bibliographystyle{IEEEtran}
\bibliography{biblo}

\onecolumn

\appendix
\
\section{Proofs and auxiliary results}
\label{sec: proofs}

\begin{proof}[\bf Proof of Proposition \ref{proposition: nonuniform}]
At each client $i\in\calA^t,$ we have
\begin{align*}
\x_i^{(t,k+1)} 
& = \pth{1-\eta}^{k+1} \x^t + \eta \bu_i \qth{\sum_{r=0}^k (1-\eta)^r}. 
\end{align*}
Using the convention that $\frac{0}{0}=0$, we get 
\begin{align*}
\x^{t+1}
& = \x^t\indc{\calA_t=\emptyset} 
+ \pth{1 - \eta}^{s}\x^t\indc{\calA^t\neq \emptyset} %
+ \frac{\eta \sum_{i\in\calA^t}\bu_i \qth{\sum_{r=0}^{s-1} \pth{1-\eta}^r}\indc{\calA^t\neq\emptyset}}{\abth{\calA^t}}\\
& = \qth{\indc{\calA^t = \emptyset}+\pth{1-\eta}^{s} \indc{\calA^t \neq \emptyset}}\x^t %
+ \eta \qth{\sum_{k=0}^{s-1} (1-\eta)^k} \frac{\indc{\calA^t \neq \emptyset}}{\abth{\calA^t}}\sum_{i\in \calA^t}\bu_i. 
\end{align*}
Let $\eta_t = \eta$ for all $t$. 
Since $p_i^t=p_i$ for all $i\in [m]$, 
\[
\expect{\frac{1}{\abth{\calA^t}}\sum_{i\in \calA^t}\bu_i \Big| \calA^t \neq \emptyset} = \expect{\frac{1}{\abth{\calA^1}}\sum_{i\in \calA^1}\bu_i \Big| \calA^1 \neq \emptyset} 
\]
holds for all $t$. 
Taking expectation w.r.t. $\calA^t$, we get 
\begin{align*}
\x^{t+1}  &= \qth{\prob{\calA^t = \emptyset}+\pth{1-\eta}^{s} \prob{\calA^t \neq \emptyset}} \x^t  %
+ \eta \qth{\sum_{k=0}^{s-1} (1-\eta)^k} \expect{\frac{\sum_{i\in \calA^t}\bu_i}{\abth{\calA^t}}\Big| \calA^t \neq \emptyset} \prob{\calA^t \neq \emptyset}\\
& = \pth{1 - {\rma}^{t+1}} \expect{\frac{1}{\abth{\calA^1}}\sum_{i\in \calA^1}\bu_i \Big| \calA^1 \neq \emptyset}, 
\end{align*}
where we use the fact that $\x^0=\bm{0}$, and 
\begin{align*}
\rma& = \Pi_{i=1}^m\pth{1-p_i} + \qth{1 -\Pi_{i=1}^m\pth{1-p_i}}\pth{1-\eta}^s. 
\end{align*}
Since $\rma<1$, we get 
$
\lim_{t\rightarrow\infty} 1 - \rma^{t+1} = 1. 
$
Let $X_i = \indc{i\in \calA^1}$ for each $i\in [m]$. 
In sequel, we alternatively state the event $\sum_{i=1}^m X_i \neq 0$ as $\calA^1 \neq \emptyset$ since they are equivalent.
\begin{small}
\begin{align*}
&\expect{\frac{\sum_{i\in \calA^1}\bu_i}{\abth{\calA^1}} \Big| \calA^1 \neq \emptyset} 
 =\expect{\frac{\sum_{i=1}^m X_i \bu_i}{\sum_{i=1}^m X_i} \Big| \calA^1 \neq \emptyset} \\
&= \expect{\sum_{i=1}^m \frac{X_i}{\sum_{i=1}^m X_i} \bu_i \Big| \calA^1 \neq \emptyset} = \sum_{i=1}^m \bu_i\expect{ \frac{X_i}{\sum_{j=1}^m X_j} \Big| \calA^1 \neq \emptyset}.
\end{align*}
\end{small}
Using the convention that $\frac{0}{0}=0$, we know that
\begin{align*}
& \expect{\frac{X_i}{\sum_{j=1}^M X_j}  \Big| \sum_{j=1}^M X_j \neq 0} \\
& = \frac{\expect{\frac{X_i}{\sum_{i=1}^M X_i}  \Big| \calA^1 \neq \emptyset}\prob{\calA^1 \neq \emptyset}  + 0 \times\prob{\calA^1 = \emptyset}}{\prob{\calA^1 \neq \emptyset}}\\
& = \frac{1}{1-\Pi_{i=1}^m\pth{1-p_i}}\expect{ \frac{X_i}{\sum_{j=1}^m X_j}}.
\end{align*}
Additionally, 
\begin{align*}
\expect{ \frac{X_i}{\sum_{i=1}^m X_i}} 
&= \prob{X_i = 1}\expect{ \frac{X_i}{\sum_{j=1}^m X_j}\Big| X_i = 1} \\
&+ \prob{X_i = 0}\expect{ \frac{X_i}{\sum_{j=1}^m X_j} \Big| X_i = 0}\\
&= p_i\expect{ \frac{1}{1 + \sum_{j\in [m]\setminus \sth{i}} X_j}\Big| X_i = 1}\\
&=  p_i 
+\sum_{j=2}^{m} \pth{-1}^{j+1} \frac{p_i}{j}\sum_{ 
S\in \calB_j} \prod_{z\in S} p_z,
\end{align*}
where $\calB_j \triangleq \sth{S \Big|S\subseteq [m]\setminus\sth{i}, \abth{S} = j-1
},$ and the last follows from the definition of a binomial distribution and can be seen through inspection of the terms.
\end{proof}

\begin{proof}[\bf Proof of Lemma \ref{lemma: ergodicity}]
For ease of exposition, in this proof we drop the time index.

We first get the explicit expression for $\expect{W^2_{jj^{\prime}}\mid \calA \neq \emptyset}$. 
For $j^{\prime}\not=j$, we have 
\begin{align*}
W^2_{jj^{\prime}} & = \sum_{k=1}^m W_{jk}W_{j^{\prime}k} \\
& = W_{jj}W_{j^{\prime}j} + W_{jj^{\prime}}W_{j^{\prime}j^{\prime}} + \sum_{k\in [m]\setminus \{j, j^{\prime}\}} W_{jk}W_{j^{\prime}k}.   
\end{align*}
When $k\not=j$ and $k\not=j^{\prime}$, we have 
\begin{align*}
W_{jk}W_{j^{\prime}k} & = \frac{1}{|\calA|^2} \indc{j\in \calA} \indc{j^{\prime}\in \calA}\indc{k\in \calA}.     
\end{align*}
In addition, we have 
\begin{align*}
W_{jj}W_{j^{\prime}j} & = \frac{1}{|\calA|} \pth{1-\indc{j\in \calA}} \indc{j\in \calA}\indc{j^{\prime}\in \calA} \\
& + \frac{1}{|\calA|^2} \indc{j\in \calA}\indc{j^{\prime}\in \calA}, 
\end{align*}
and 
\begin{align*}
W_{j^{\prime}j^{\prime}}W_{jj^{\prime}} 
& = \frac{1}{|\calA|} \pth{1-\indc{j^{\prime}\in \calA}} \indc{j\in \calA}\indc{j^{\prime}\in \calA} \\
& + \frac{1}{|\calA|^2} \indc{j\in \calA}\indc{j^{\prime}\in \calA}. 
\end{align*} 
Thus, 
\begin{align*}
\label{eq: doubly matrix: off-diagnal}
& W^2_{jj^{\prime}} = \sum_{k=1}^m W_{jk}W_{j^{\prime}k} \\ 
& = \frac{1}{|\calA|}\indc{j\in \calA} \indc{j^{\prime}\in \calA} + \frac{1}{|\calA|} \pth{1-\indc{j\in \calA}} \indc{j\in \calA}\indc{j^{\prime}\in \calA}\\
&+ \frac{1}{|\calA|} \pth{1-\indc{j\in \calA}} \indc{j\in \calA}\indc{j^{\prime}\in \calA}. 
\end{align*}

For $j=j^{\prime}$, we have 
$W^2_{jj} = \frac{1}{|\calA|}\indc{j\in \calA} + \pth{1-\indc{j\in \calA}}. $
Taking expectation, we get 
\begin{align*}
\expect{W^2_{jj}\mid \calA^t\neq\emptyset} 
& = \expect{\frac{1}{|\calA|}\indc{j\in \calA} + \pth{1-\indc{j\in \calA}}}\\
&= \expect{\frac{1}{1+|\calA \setminus \{j\}|}}p_j + 1\cdot (1-p_j).  
\end{align*}
Note that $\calA \setminus \{j\}$ is random and could be empty. 

Let $X_i = \indc{i\in \calA}$. 
We have 
\begin{align*}
\expect{\frac{1}{1+|\calA \setminus \{j\}|}} & = \expect{\frac{1}{1+\sum_{i\in [m]\setminus \{j\}} X_i}} = \int_{0}^1 \prod_{k\not=j}\qth{(1-p_k) + p_k s}\diff s \\ 
&\ge \int_{0}^1 \prod_{k\not=j}\qth{(1-p_k)s + p_k s}\diff s =\frac{1}{m}. 
\end{align*}
Thus, 
$
\expect{W^2_{jj}\mid\calA\neq \emptyset} \ge \frac{1}{m} p_j + (1-p_j) \ge \frac{1}{m}{ \ge \frac{c^2}{m}}.      
$
Similarly, 
\begin{small}
\begin{align*}
\expect{W^2_{jj^{\prime}}\mid \calA \neq \emptyset} & = p_j p_{j^{\prime}} \expect{\frac{1}{2+ \sum_{k\in [m]\setminus \{j, j^{\prime}\}}X_k }} \ge \frac{ p_j p_{j^{\prime}}}{m} \ge \frac{c^2}{m}. 
\end{align*}
\end{small}
Then, 
\begin{align*}
M_{jj^{\prime}}=\expect{W^2_{jj^\prime}} & = \expect{W^2_{jj^\prime}\mid \calA \neq\emptyset}\prob{\calA \neq\emptyset}\\
&+\expect{W^2_{jj^\prime}\mid \calA =\emptyset}\prob{\calA =\emptyset}\\
&\ge \frac{c^2}{m}\qth{1-\pth{1-c}^m}.
\end{align*}

We first show that $\rho (t) = \lambda_2 (M).$ 
We denote by $\lambda_i$ and $v_i$ the non-increasing eigenvalues and the associated eigenvectors of matrix $M$ for $i\in [m]$ with $\lambda_1 = 1$ and $v_1 = \frac{1}{\sqrt{m}}\Indc.$
By spectral decomposition
\begin{align*}
    M - \frac{1}{m}\bm{1}\bm{1}^{\top} = \sum_{i=1}^{m} \lambda_i v_i v_i^\top - \frac{1}{m}\bm{1}\bm{1}^{\top} = \sum_{i=2}^{m} \lambda_i v_i v_i^\top, 
\end{align*}
showing $\rho(t) = \lambda_2.$

Next, we show that a Markov chain with $M$ as the transition matrix is ergodic. 
This is indeed true as the chain is (1) {\it irreducible}: $M_{j j^\prime}\ge  \frac{c^2}{m}\qth{1-\pth{1-c}^m}>0$ for $j,j^\prime \in[m]$ and (2) {\it aperiodic} (it has self-loops.) Moreover, it has a stationary distribution
$
\pi = \frac{1}{m}\Indc^\top.
$
Furthermore, this irreducible Markov chain is reversible since the following property is satisfied for all the states
$
\pi_i M_{ij} = \pi_j M_{ji}.
$

Following \cite{jerrum1988conductance}, the conductance of reversible Markov chain with underlying graph $\calG$ is defined by
$
\Phi(\calG) = \min_{\sum_{i\in\calS} \pi_i \le \frac{1}{2}} \frac{\sum_{i\in\calS, j\notin \calS} w_{ij}}{\sum_{i\in \calS} \pi_i},
$
where the vertices of the graph are the states of the $M$ Markov chain, and for each pair $i,j\in \calV,$ the edge weight $w_{ij} = M_{ij} \pi_i = M_{ji} \pi_j.$ 
From Cheeger's inequality, we know that
$
\frac{1 - \lambda_2}{2} \le \Phi(\calG) \le \sqrt{2 \pth{1-\lambda_2}},
$
where $\lambda_2$ is the second largest eigenvalue of %
$M.$
It remains to bound $\Phi(\calG),$
\begin{align*}
    \Phi(\calG) 
    &= \min_{\sum_{i\in\calS} \pi_i \le \frac{1}{2}} \frac{\pi_i \sum_{i\in\calS, j\notin \calS} M_{ij}}{\sum_{i\in \calS} \pi_i}\\
    &\ge \frac{\pth{\frac{c}{m}}^2\qth{1-\pth{1-c}^m}\abth{\calS}\abth{\bar{\calS}}}{\frac{\abth{\calS}}{m}} = \frac{c^2\qth{1-\pth{1-c}^m}}{m} \abth{\bar{\calS}},
\end{align*}
where the inequality follows from (1) $\calG$ is fully-connected (2) $M_{j j^\prime}\ge \frac{c^2}{m} \qth{1-\pth{1-c}^m}$ for $j,j^\prime \in[m].$
Meanwhile,
$
\abth{\bar{\calS}} = m - \abth{\calS} \ge \frac{m}{2}.
$
Plug it back in, we get
\begin{align*}
    \Phi(\calG) 
    \ge \frac{c^2\qth{1-\pth{1-c}^m}}{m} \abth{\bar{\calS}} \ge \frac{c^2\qth{1-\pth{1-c}^m}}{2}.
\end{align*}
Thus,
$
    \rho(t) = \lambda_2 \le 1 - \frac{\Phi^2\pth{\calG}}{2} \le 1 - \frac{c^4\qth{1-\pth{1-c}^m}^2}{8}.
$
\end{proof}
\begin{proof}[\bf Proof of Lemma \ref{lemma: average mixing}]
Similar to the proof in \cite{wang2022matcha}, let us define $A_{r,t}\triangleq \prod_{l=r}^t W^{(\ell)} - \allones$ and use $\bm{b}_i^\top$ to denote the $i$-th row vector of $B.$ 
Since for $\ell\in\naturals,$ we have $(W^{(\ell)})^\top = W^{(\ell)}$ and $W^{(\ell)}\allones = \allones W^{(\ell)} = \allones.$
Thus, one can obtain
\[
A_{1,t} = \prod_{\ell=1}^t \pth{W^{(\ell)} - \allones} = A_{1,t-1}\pth{W^{(t)}-\allones}.
\]
Then, by taking expectation w.r.t. $W^{(t)},$ we have 
\begin{align*}
    & \mathbb{E}_{W^{(t)}}\qth{\fnorm{B A_{1,t}}^2}\\
    & = \sum_{i=1}^d \mathbb{E}_{W^{(t)}}\qth{\norm{b_i^\top A_{1,t}}^2}\\
    & = \sum_{i=1}^d \mathbb{E}_{W^{(t)}}\qth{b_i^\top A_{1,t-1} \pth{\pth{W^{(t)}}^\top W^{(t)} -\allones}A_{1,t-1}^\top b_i}\\
    & = b_i^\top A_{1,t-1}\mathbb{E}_{W^{(t)}}\qth{\pth{(W^{(t)})^\top W^{(t)} -\allones}} A_{1,t-1}^\top b_i.
\end{align*}
Let $C_t = \mathbb{E}_{W^{(t)}}\qth{(W^{(t)})^\top W^{(t)} - \allones}$ and $v_i = A_{1,t-1}^\top b_i,$ then
\begin{align*}
\mathbb{E}_{W(t)}\qth{\fnorm{BA_{1,t}}^2} & 
= \sum_{i=1}^d v_i^\top C_t v_i \le \sigma_{\max}\pth{C_t} \sum_{i=1}^d v_i^\top v_i \\
& \le \rho \fnorm{BA_{1,t-1}}^2.
\end{align*}
Repeat the above procedures, since $W^{(\ell)}$'s are independent matrices, we have
\[
\expect{\fnorm{BA_{1,t}}^2} = 
\mathbb{E}_{W^{(1)}}\qth{\mathbb{E}_{W^{(2)}}\qth{\cdots  \mathbb{E}_{W^{(t-1)}}\qth{ \mathbb{E}_{W^{(t)}}\qth{\fnorm{BA_{1,t}}^2}} } } \le \rho^{t}\fnorm{B}^2.
\]
\end{proof}
\begin{proof}[\bf Proof of Lemma \ref{lemma: local step perturbation}]
By the definition of $\kappa$,
$$
\kappa \eta\binom{s}{2} L_i
\ge \frac{(1+\eta L_i)^s - 1- s \eta_t L_i}{\eta L_i}.
$$
Hence it suffices to show 
\begin{equation}
\label{eq:s-step-diff-proof}
\norm{\sum_{k=0}^{s-1}\qth{ \nabla \ell_i (\x^{\pth{t,k}}) - \nabla \ell_i (\x^t)}}
\le \frac{(1+\eta L_i)^s - 1- s \eta L_i}{\eta L_i} \norm{\nabla \ell_{i}(\x_t)}.
\end{equation}
We prove~\eqref{eq:s-step-diff-proof} holds for all $s\ge 1$ by induction.
The base case $s=1$ follows from the definition.
Suppose~\eqref{eq:s-step-diff-proof} holds true for $s=1,\dots,n-1$, where $n\ge 2$.
Next we prove~\eqref{eq:s-step-diff-proof} for $s = n$.
We have
\begin{align}
 \norm{ \nabla \ell_i (\x^{\pth{t,n-1}}) - \nabla \ell_i (\x^t)}
 &\le L_i \norm{\x^{\pth{t,n-1}} - \x^t}\nonumber\\
 &\le L_i \eta \norm{\sum_{k=0}^{n-2}\qth{\nabla \ell_i \pth{\x^{\pth{t,k}}}-\nabla \ell_i \pth{\x^{t}}}} + L_i \eta \pth{n-1} \norm{\nabla \ell_i\pth{\x^t}}\nonumber\\
 &\overset{\pth{\rma}}{\le} \qth{\pth{1+\eta L_i}^{n-1} - 1}\norm{\nabla \ell_i\pth{\x^t}}, \label{eq: induction}
\end{align}
where $\pth{\rma}$ follows from the induction hypothesis.

Plug Eq.\,\eqref{eq: induction} back in, use the induction hypothesis and triangle inequality, we get
\begin{align*}
\norm{\sum_{k=0}^{n-1}\qth{ \nabla \ell_i (\x^{\pth{t,k}}) - \nabla \ell_i (\x^t)}}
&\le  \norm{\sum_{k=0}^{n-2}\qth{ \nabla \ell_i (\x^{\pth{t,k}}) - \nabla \ell_i (\x^t)}} + \norm{ \nabla \ell_i (\x^{\pth{t,n-1}}) - \nabla \ell_i (\x^t)}\\
& \le \qth{\frac{\pth{1+\eta L_i}^{n-1} - 1 - \pth{n-1}\eta L_i}{\eta L_i} + \pth{1+\eta L_i}^{n-1} - 1}\norm{\nabla \ell_i\pth{\x^t}}\\
& = \frac{\pth{1+\eta L_i}^{n} - 1 - n \eta L_i}{\eta L_i}\norm{\nabla \ell_i\pth{\x^t}}.
\end{align*}
The proof is completed.
\end{proof}
\begin{proof}[\bf Proof of Claim \ref{claim: monotonicity}]
Recall that 
\begin{align*}
\kappa &= \frac{(1+\eta L)^s - 1- s \eta L}{\binom{s}{2}\pth{\eta L}^2}.
\end{align*}
From binomial theorem, we know that
\[
(1+\eta L)^s = \sum_{i=0}^s \binom{s}{i} \pth{\eta L}^i,
\]
it follows that
\begin{align*}
\frac{(1+\eta L)^s - 1- s \eta L }{\binom{s}{2}\pth{\eta L}^2}&= \frac{\sum_{i=2}^s \binom{s}{i} \pth{\eta L}^i}{\binom{s}{2}\pth{\eta L}^2}
= \sum_{i=2}^s \frac{\binom{s}{i}}{\binom{s}{2}} \pth{\eta L}^{i-2}.
\end{align*}
Since $i-2\ge 0$ for $i\ge 2,$ we can see that $\kappa$ is a polynomial of $\pth{\eta L}.$ 
Thus, it is monotonic non-decreasing w.r.t. $\eta>0.$

The proof is completed.
\end{proof}
\begin{proposition}
\label{proposition: average gradient to global gradient}
For any $t\in[T-1]$, it holds that 
\begin{align*}
&\frac{1}{m}\sum_{i=1}^m\norm{\nabla F_i(\x_i^t)}^2 \le \frac{3L^2}{m} \sum_{i=1}^m \norm{\x_i^t - \bar{\x}^t}^2 + 3\pth{\beta^2 + 1}\norm{\nabla F(\bar{\x}^t)}^2 + 3\xi^2. 
\end{align*}
\end{proposition}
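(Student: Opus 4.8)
The plan is to use the elementary three-way decomposition of $\nabla F_i(\x_i^t)$ around $\bar{\x}^t$ together with the bound $\norm{a+b+c}^2 \le 3\norm{a}^2 + 3\norm{b}^2 + 3\norm{c}^2$, average over $i\in[m]$, and estimate the three resulting sums separately. Concretely, for each client $i$ I would write $\nabla F_i(\x_i^t) = a_i + b_i + c$ with
\[
a_i := \nabla F_i(\x_i^t) - \nabla F_i(\bar{\x}^t),\quad b_i := \nabla F_i(\bar{\x}^t) - \nabla F(\bar{\x}^t),\quad c := \nabla F(\bar{\x}^t),
\]
so that $\frac1m\sum_{i=1}^m \norm{\nabla F_i(\x_i^t)}^2 \le \frac3m\sum_{i=1}^m\norm{a_i}^2 + \frac3m\sum_{i=1}^m\norm{b_i}^2 + \frac3m\sum_{i=1}^m\norm{c}^2$.

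For the first sum I would invoke smoothness. Assumption \ref{ass: 2 smmothness} is phrased for the stochastic loss gradients $\nabla\ell_i$, but since $F_i(\x) = \Expect_{\xi_i\in\calD_i}[\ell_i(\x;\xi_i)]$ is an expectation of functions with $L_i$-Lipschitz gradient, $\nabla F_i$ is itself $L_i$-Lipschitz (one line via linearity of expectation / Jensen). Hence $\norm{a_i}^2 \le L_i^2\norm{\x_i^t - \bar{\x}^t}^2 \le L^2\norm{\x_i^t - \bar{\x}^t}^2$ with $L = \max_i L_i$, and summing gives the term $\frac{3L^2}{m}\sum_{i=1}^m\norm{\x_i^t - \bar{\x}^t}^2$. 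For the second sum, $\frac1m\sum_{i=1}^m\norm{b_i}^2$ is exactly the left-hand side of Assumption \ref{ass: bounded similarity} evaluated at $\x=\bar{\x}^t$, so it is at most $\beta^2\norm{\nabla F(\bar{\x}^t)}^2 + \zeta^2$ (the constant written $\xi^2$ in the statement). For the third sum, $\frac1m\sum_{i=1}^m\norm{c}^2 = \norm{\nabla F(\bar{\x}^t)}^2$ since the summand does not depend on $i$.

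Combining the three contributions yields $\frac{3L^2}{m}\sum_{i=1}^m\norm{\x_i^t - \bar{\x}^t}^2 + 3\beta^2\norm{\nabla F(\bar{\x}^t)}^2 + 3\norm{\nabla F(\bar{\x}^t)}^2 + 3\zeta^2$, i.e. exactly the claimed bound after collecting the two $\norm{\nabla F(\bar{\x}^t)}^2$ terms into $3(\beta^2+1)\norm{\nabla F(\bar{\x}^t)}^2$. There is no genuine obstacle in this argument: the only two small points to mention are transferring the Lipschitz-gradient property from $\ell_i$ to $F_i$, and noting that the constant $\xi^2$ in the statement is the $\zeta^2$ of Assumption \ref{ass: bounded similarity}; the rest is the standard perturbation split used throughout the federated-optimization literature.
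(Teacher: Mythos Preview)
Your proposal is correct and follows exactly the same route as the paper's proof: the same three-term split $\nabla F_i(\x_i^t) = \bigl(\nabla F_i(\x_i^t)-\nabla F_i(\bar{\x}^t)\bigr)+\bigl(\nabla F_i(\bar{\x}^t)-\nabla F(\bar{\x}^t)\bigr)+\nabla F(\bar{\x}^t)$, the bound $\norm{a+b+c}^2\le 3\norm{a}^2+3\norm{b}^2+3\norm{c}^2$, and then Assumptions~\ref{ass: 2 smmothness} and~\ref{ass: bounded similarity} to handle the first two pieces. Your side remarks about transferring the Lipschitz constant from $\nabla\ell_i$ to $\nabla F_i$ and about the $\xi^2$/$\zeta^2$ notational slip are both valid and the paper glosses over them.
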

\begin{proof}[\bf Proof of Proposition \ref{proposition: average gradient to global gradient}]

\begin{align*}
\frac{1}{m}\sum_{i=1}^m\norm{\nabla F_i(\x_i^t)}^2  &=  \frac{1}{m}\sum_{i=1}^m\norm{\nabla F_i(\x_i^t) - \nabla F_i(\bar{\x}^t) + \nabla F_i(\bar{\x}^t) - \nabla F(\bar{\x}^t) + \nabla F(\bar{\x}^t)}^2\\
& \le  \frac{3}{m}\sum_{i=1}^m \norm{\nabla F_i(\x_i^t) - \nabla F_i(\bar{\x}^t)}^2 +  \frac{3}{m}\sum_{i=1}^m \norm{\nabla F_i(\bar{\x}^t) - \nabla F(\bar{\x}^t)}^2 + 3 \norm{\nabla F(\bar{\x}^t)}^2\\
& \overset{(a)}{\le} \frac{3L^2}{m} \sum_{i=1}^m \norm{\x_i^t - \bar{\x}^t}^2  + 3 \beta^2 \norm{\nabla F(\bar{\x}^t)}^2 + 3\xi^2 + 3 \norm{\nabla F(\bar{\x}^t)}^2\\
& = \frac{3L^2}{m} \sum_{i=1}^m \norm{\x_i^t - \bar{\x}^t}^2 + 3\pth{\beta^2 + 1}\norm{\nabla F(\bar{\x}^t)}^2 + 3\xi^2, 
\end{align*} 
where inequality (a) follows from Assumptions \ref{ass: 2 smmothness} and \ref{ass: bounded similarity}.

\end{proof}
\begin{proof}[\bf Proof of Lemma \ref{lemma: descent lemma}]
By $L$-smoothness, we have 
\begin{align*}
F(\bar{\x}^{t+1})  -  F(\bar{\x}^{t}) & \le \iprod{\nabla F(\bar{\x}^{t})}{\bar{\x}^{t+1} - \bar{\x}^{t}} + \frac{L}{2}\norm{\bar{\x}^{t+1}- \bar{\x}^{t}}^2 \\
& = \iprod{\nabla F(\bar{\x}^{t})}{- \frac{\eta}{m} \bm{G}^{(t)} \bm{1}} +  \frac{L\eta^2}{2}\norm{\frac{1}{m} \bm{G}^{(t)} \bm{1}}^2.  
\end{align*}
Taking expectations with respect to the randomness in the mini-batches at $k$-th rounds, we have 
\begin{align*}
\expect{F(\bar{\x}^{t+1})  -  F(\bar{\x}^{t}) \mid \calF^{t}} & \le \expect{\iprod{\nabla F(\bar{\x}^{t})}{- \frac{\eta}{m} \bm{G}^{(t)} \bm{1}} +  \frac{L}{2}\norm{ - \frac{\eta}{m} \bm{G}^{(t)} \bm{1}}^2}. 
\end{align*}
For ease of notations, we abbreviate $\nabla \ell_i\pth{\x_i^{\pth{t,k}}}$ as $\nabla \ell_i^{\pth{t,k}}.$
\paragraph{Bounding $\iprod{\nabla f(\bar{\x}^{t})}{- \frac{\eta}{m} \nabla \bm{F}^{(t)} \bm{1}}$.}
\begin{align*}
&\expect{\iprod{\nabla F(\bar{\x}^{t})}{- \frac{\eta}{m} \bm{G}^{(t)} \bm{1}} \mid \calF^{t}} 
 = - \frac{\eta}{m} \expect{\iprod{\nabla F(\bar{\x}^{t})}{\sum_{i=1}^m \sum_{k=0}^{s-1} \nabla \ell_i^{(t,k)}} \mid \calF^{t}}    \\
& = - \frac{\eta}{m} \expect{\iprod{\nabla F(\bar{\x}^{t})}{\sum_{i=1}^m s \nabla \ell_i^{(t,0)} - s \nabla \ell_i^{(t,0)} +\sum_{k=0}^{s-1} \nabla \ell_i^{(t,k)}} \mid \calF^{t}} \\
& =  - \frac{s\eta}{m} \iprod{\nabla F(\bar{\x}^{t})}{\sum_{i=1}^m  \nabla F_i(\x_i^t)} + \expect{\sum_{i=1}^m \frac{\eta}{m} \iprod{\nabla F(\bar{\x}^{t})}{s \nabla \ell_i^{(t,0)} - \sum_{k=0}^{s-1} \nabla \ell_i^{(t,k)}} \mid \calF^{t}} \\
& = \underbrace{- \frac{s\eta}{m} \iprod{\nabla F(\bar{\x}^{t})}{\nabla \bm{F}^{(t)}\bm{1}}}_{(\rmA)} + \underbrace{\expect{\frac{\eta}{m} \iprod{\nabla F(\bar{\x}^{t})}{\sum_{i=1}^m s \nabla \ell_i^{(t,0)} - \sum_{k=0}^{s-1} \nabla \ell_i^{(t,k)} }\mid \calF^{t}}}_{(\rmB)}.  
\end{align*}
Term $(\rmA)$ can be bounded as 
\begin{align*}
\iprod{\nabla F(\bar{\x}^{t})}{- \frac{s\eta}{m} \nabla \bm{F}^{(t)} \bm{1}} 
& = - s\eta\iprod{\nabla F(\bar{\x}^{t})}{ \frac{1}{m} \nabla \bm{F}^{(t)} \bm{1}} \\
& = - \frac{s \eta}{2}\pth{\norm{\nabla F(\bar{\x}^{t})}^2 + \norm{\frac{1}{m}\nabla \bm{F}^{(t)}\bm{1}}^2 - \norm{\nabla F(\bar{\x}^{t}) - \frac{1}{m}\nabla \bm{F}^{(t)}\bm{1}}^2} \\
& =  -\frac{s\eta}{2} \norm{\nabla F(\bar{\x}^{t})}^2 - \frac{s\eta}{2} \norm{\frac{1}{m}\nabla \bm{F}^{(t)}\bm{1}}^2 
+ \frac{s\eta}{2} \norm{\nabla F(\bar{\x}^{t}) - \frac{1}{m}\nabla \bm{F}^{(t)}\bm{1}}^2 \\
& \le  -\frac{s\eta}{2} \norm{\nabla F(\bar{\x}^{t})}^2 - \frac{s\eta}{2} \norm{\frac{1}{m}\nabla \bm{F}^{(t)}\bm{1}}^2 
+ \frac{s\eta L^2 }{2m}\sum_{i=1}^m \norm{\bar{\x}^{t} - \x_i^t}^2.   
\end{align*}

For term (B), we have
\begin{align*}
&\expect{\frac{\eta}{m} \iprod{\nabla F(\bar{\x}^{t})}{\sum_{i=1}^m s \nabla \ell_i^{(t,0)} - \sum_{k=0}^{s-1} \nabla \ell_i^{(t,k)}} \mid \calF^{t}} \\
& = \frac{\eta}{m} \sum_{i=1}^m\iprod{\nabla F(\bar{\x}^{t})}{ \expect{ s \nabla \ell_i^{(t,0)} - \sum_{k=0}^{s-1} \nabla \ell_i^{(t,k)}\mid \calF^{t}}} \\
& \le \frac{\eta}{2m} \sum_{i=1}^m \pth{\eta s^2 \norm{\nabla F(\bar{\x}^{t})}^2 + \frac{1}{\eta s^2} \norm{\expect{ s \nabla \ell_i^{(t,0)} - \sum_{k=0}^{s-1} \nabla \ell_i^{(t,k)}\mid \calF^{t}}}^2 }  \\
& \overset{(\rma)}{\le} \frac{\eta^2 s^2}{2} \norm{\nabla F(\bar{x}^{t})}^2 + \underbrace{\frac{1}{2m s^2} \sum_{i=1}^m \expect{\norm{s \nabla \ell_i^{(t,0)} - \sum_{k=0}^{s-1} \nabla \ell_i^{(t,k)}}^2 \mid \calF^{t}}}_{(\rmB.1)}.  
\end{align*}
From Lemma \ref{lemma: local step perturbation}, we bound term $(\rmB.1)$ as follows  
\begin{align*}
&\frac{1}{2m s^2} \sum_{i=1}^m \expect{\norm{s \nabla \ell_i^{(t,0)} - \sum_{k=0}^{s-1} \nabla \ell_i^{(t,k)}}^2 \mid \calF^{t}} \\
& \le \frac{1}{2m s^2} \sum_{i=1}^m \expect{ \kappa^2 \eta^2 \binom{s}{2}^2 L^2\norm{\nabla \ell_{i}^{\pth{t,0}}}^2 \mid \calF^{t}} \\
& = \frac{\kappa^2 \eta^2 \binom{s}{2}^2 L^2}{2m s^2} \sum_{i=1}^m \expect{\norm{\nabla \ell_{i}^{\pth{t,0}}}^2 \mid \calF^{t}}\\
& = \frac{\kappa^2 \eta^2 \binom{s}{2}^2 L^2}{2m s^2} \sum_{i=1}^m \expect{\norm{\nabla \ell_{i}^{\pth{t,0}} - \nabla F_i(\x_i^t) + \nabla F_i(\x_i^t)}^2 \mid \calF^{t}} \\
& \le \frac{\kappa^2 \eta^2 \binom{s}{2}^2 L^2}{m s^2} \sum_{i=1}^m \expect{\norm{\nabla \ell_{i}^{\pth{t,0}} - \nabla F_i(\x_i^t)}^2\mid \calF^{t}} + \norm{\nabla F_i(\x_i^t)}^2\\
& \overset{(\rma)}{\le} \kappa^2 \eta^2 s^2 L^2 \sigma^2 + \frac{\kappa^2 \eta^2 s^2 L^2}{m} \sum_{i=1}^m\norm{\nabla F_i(\x_i^t)}^2\\
& \le \kappa^2 \eta^2 s^2 L^2 \frac{3L^2}{m} \sum_{i=1}^m \norm{\x_i^t - \bar{\x}^t}^2 + 3\kappa^2 \eta^2 s^2 L^2\pth{\beta^2 + 1}\norm{\nabla F(\bar{\x}^t)}^2 + \kappa^2 \eta^2 s^2 L^2 (3\xi^2 +\sigma^2), 
\end{align*}
where inequality $(\rma)$ follows from Assumption \ref{ass: bounded variance client-wise}, and the last inequality follows from Proposition \ref{proposition: average gradient to global gradient}. 
Thus, term $(\rmB)$ can be further bounded as 
\begin{align*}
&\expect{\frac{\eta}{m} \iprod{\nabla F(\bar{\x}^{t})}{\sum_{i=1}^m s \nabla \ell_i^{(t,0)} - \sum_{k=0}^{s-1} \nabla \ell_i^{(t,k)}} \mid \calF^{t}} \\
& \le \frac{\eta^2 s^2}{2} \norm{\nabla F(\bar{\x}^{t})}^2 + \frac{3L^4 \eta^2 \kappa^2 s^2 }{m} \sum_{i=1}^m \norm{\x_i^t - \bar{\x}^t}^2 + 3\kappa^2 \eta^2 s^2 L^2\pth{\beta^2 + 1}\norm{\nabla F(\bar{\x}^t)}^2 + \kappa^2 \eta^2 s^2 L^2 (3\xi^2 +\sigma^2). 
\end{align*}
Combing the bounds of terms $(\rmA)$ and $(\rmB)$, we get 
\begin{align}
\label{eq: bound 1}
\nonumber
\expect{\iprod{\nabla F(\bar{\x}^{t})}{- \frac{\eta}{m} \bm{G}^{(t)} \bm{1}} \mid \calF^{t}} 
&\le  - \qth{\frac{s\eta}{2} - \frac{\eta^2 s^2}{2} - 3\kappa^2 \eta^2 s^2 L^2\pth{\beta^2 + 1}} \norm{\nabla F(\bar{\x}^t)}^2\\
\nonumber
& \qquad - \frac{s\eta}{2} \norm{\frac{1}{m}\nabla \bm{F}^{(t)}\bm{1}}^2 + \kappa^2 \eta^2 s^2 L^2 (3\xi^2 +\sigma^2)\\
& \qquad + \pth{\frac{s\eta L^2 }{2m} + \kappa^2 \eta^2 s^2 L^2 \frac{3L^2}{m}} \sum_{i=1}^m \norm{\bar{\x}^{t} - \x_i^t}^2. 
\end{align}
\paragraph{Bounding $\expect{\norm{\frac{1}{m} \bm{G}^{(t)} \bm{1}}^2 \mid \calF^{t}}$}

So, we have 
\begin{align*}
\norm{\frac{1}{m} \bm{G}^{(t)} \bm{1}}^2 & =  \norm{\frac{1}{m} \sum_{i=1}^m \sum_{k=0}^{s-1} \nabla \ell_i^{(t,k)}}^2  \\
& =  \norm{\frac{1}{m} \sum_{i=1}^m \sum_{k=0}^{s-1} \pth{\nabla \ell_i^{(t,k)} - \nabla \ell_i^{(t,0)} + \nabla \ell_i^{(t,0)}}}^2 \\
& \le 2 \underbrace{\norm{\frac{1}{m} \sum_{i=1}^m \sum_{k=0}^{s-1} \pth{\nabla \ell_i^{(t,k)} - \nabla \ell_i^{(t,0)}}}^2}_{(\rmC)} + 2 \underbrace{\norm{\frac{s}{m} \sum_{i=1}^m   \nabla \ell_i^{(t,0)}}^2}_{(\rmD)}.  
\end{align*}
\noindent For term $(\rmC)$, %
by Lemma \ref{lemma: local step perturbation}, we have
\begin{align*}
\norm{\frac{1}{m} \sum_{i=1}^m \sum_{k=0}^{s-1} \pth{\nabla \ell_i^{(t,k)} - \nabla \ell_i^{(t,0)}}} 
& \le \frac{1}{m} \sum_{i=1}^m \norm{\sum_{k=0}^{s-1} \pth{\nabla \ell_i^{(t,k)} - \nabla \ell_i^{(t,0)}}} \\
&  \le  \frac{\kappa \eta s^2L}{2m} \sum_{i=1}^m \norm{\nabla \ell_i^{(t,0)}}. 
\end{align*}
Thus, we get 
\begin{align*}
\norm{\frac{1}{m} \sum_{i=1}^m \sum_{k=0}^{s-1} \pth{\nabla \ell_i^{(t,k)} - \nabla \ell_i^{(t,0)}}}^2  
&\le \frac{\kappa^2 \eta^2 s^4 L^2}{4m} \sum_{i=1}^m \norm{\nabla \ell_i^{(t,0)}}^2 \\
& \le \frac{\kappa^2 \eta^2 s^4 L^2}{2m} \pth{\sum_{i=1}^m \norm{\nabla \ell_i^{(t,0)} - \nabla F_i(\x_i^t)}^2 + \sum_{i=1}^m \norm{\nabla F_i(\x_i^t)}^2}. 
\end{align*}
By Assumption \ref{ass: bounded variance client-wise}, we obtain 
\begin{align*}
\expect{\norm{\frac{1}{m} \sum_{i=1}^m \sum_{k=0}^{s-1} \pth{\nabla \ell_i^{(t,k)} - \nabla \ell_i^{(t,0)}}}^2  \mid \calF^{t}}   
& \le  \frac{\kappa^2 \eta^2 s^4 L^2\sigma^2}{2} + \frac{\kappa^2 \eta^2 s^4 L^2}{2m} \sum_{i=1}^m \norm{\nabla F_i(\x_i^t)}^2. 
\end{align*}
\noindent For term $(\rmD)$,  by Assumption \ref{ass: bounded variance client-wise}, we have 
\begin{align*}
\expect{\frac{s^2}{m^2} \norm{\sum_{i=1}^m   \nabla \ell_i^{(t,0)}}^2\mid \calF^{t}}  
& \le  \expect{ \frac{2s^2}{m^2}\norm{\sum_{i=1}^m  \nabla \ell_i^{(t,0)} - \nabla F_i(\x_i^t)}^2 \mid \calF^{t}}+ \frac{2s^2}{m^2}\norm{\sum_{i=1}^m  \nabla F_i(\x_i^t)}^2\\
& \le \frac{2s^2 \sigma^2}{m}+ \frac{2s^2}{m^2}\norm{\sum_{i=1}^m  \nabla F_i(\x_i^t)}^2.
\end{align*}
Combing the above upper bounds of terms (C) and (D), we get 
\begin{align*}
\expect{\norm{\frac{1}{m} \bm{G}^{(t)} \bm{1}}^2 \mid \calF^{t}}  
& \le 2\qth{\frac{2s^2}{m} \sum_{i=1}^m \norm{\nabla F_i(\x_i^t)}^2 + \frac{\kappa^2 \eta^2 s^4 L^2}{2m} \sum_{i=1}^m \norm{\nabla F_i(\x_i^t)}^2  + s^2\sigma^2\pth{\frac{2}{m} + \frac{\kappa^2 \eta^2 s^2 L^2}{2}}} \\
& = s^2\pth{4 +\kappa^2 \eta^2 s^2 L^2 } \frac{1}{m}\sum_{i=1}^m \norm{\nabla F_i(\x_i^t)}^2 + s^2\sigma^2\pth{\frac{4}{m} + \kappa^2 \eta^2 s^2 L^2}.  
\end{align*}
Applying Proposition \ref{proposition: average gradient to global gradient}, we get 
\begin{align}
\label{eq: bound 2}
\nonumber
\expect{\norm{\frac{1}{m} \bm{G}^{(t)} \bm{1}}^2 \mid \calF^{t}} 
& \le 6 s^2L^2\pth{2 +\frac{\kappa^2 \eta^2 s^2 L^2}{2}} \frac{1}{m} \sum_{i=1}^m \norm{\x_i^t - \bar{\x}^t}^2 \\
\nonumber
& ~ +6 s^2\pth{\beta^2 + 1}\pth{2 +\frac{\kappa^2 \eta^2 s^2 L^2}{2}}\norm{\nabla F(\bar{\x}^t)}^2 \\
& ~ + 6 s^2\xi^2 \pth{2 +\frac{\kappa^2 \eta^2 s^2 L^2}{2}} + 2 s^2\sigma^2\pth{\frac{2}{m} + \frac{\kappa^2 \eta^2 s^2 L^2}{2}}. 
\end{align}
\paragraph{Putting them together.}
With Eq.\eqref{eq: bound 1} and \eqref{eq: bound 2}, we have 
\begin{align*}
\expect{F(\bar{\x}^{t+1})  -  F(\bar{\x}^{t}) \mid \calF^{t}}  
&\le \expect{\iprod{\nabla F(\bar{\x}^{t})}{- \frac{\eta}{m} \bm{G}^{(t)} \bm{1}} \mid \calF^{t}} +  \frac{L\eta^2}{2}\expect{\norm{\frac{1}{m} \bm{G}^{(t)} \bm{1}}^2 \mid \calF^{t}}\\
& \le - \qth{\frac{\eta s}{2} - \frac{\eta^2 s^2}{2} - 3\kappa^2 \eta^2 s^2 L^2\pth{\beta^2 + 1}} \norm{\nabla F(\bar{\x}^t)}^2\\
\nonumber
& \qquad - \frac{\eta s}{2} \norm{\frac{1}{m}\nabla \bm{F}^{(t)}\bm{1}}^2 + \kappa^2 \eta^2 s^2 L^2 (3\xi^2 +\sigma^2)\\
& \qquad + \pth{\frac{s\eta L^2 }{2m} + \kappa^2 \eta^2 s^2 \frac{3L^4}{m}} \sum_{i=1}^m \norm{\bar{\x}^{t} - \x_i^t}^2 \\
& \qquad + \frac{L\eta^2}{2}6s^2L^2\pth{2 +\frac{\kappa^2 L^2}{2}} \frac{1}{m} \sum_{i=1}^m \norm{\x_i^t - \bar{\x}^t}^2 \\
\nonumber
& \qquad +\frac{L\eta^2}{2}6 s^2\pth{\beta^2 + 1}\pth{2 +\frac{\kappa^2 L^2}{2}}\norm{\nabla F(\bar{\x}^t)}^2 \\
&\qquad + \frac{L\eta^2}{2}6 s^2\xi^2 \pth{2 +\frac{\kappa^2  L^2}{2}} + \frac{L\eta^2}{2} 2s^2\sigma^2\pth{\frac{2}{m} + \frac{\kappa^2 L^2}{2}}. 
\end{align*}
We can choose $\eta\le \frac{1}{2s}$ so that
\begin{align*}
\expect{F(\bar{\x}^{t+1})  -  F(\bar{\x}^{t}) \mid \calF^{t}}  
&\le \expect{\iprod{\nabla F(\bar{\x}^{t})}{- \frac{\eta}{m} \bm{G}^{(t)} \bm{1}} \mid \calF^{t}} +  \frac{L\eta^2}{2}\expect{\norm{\frac{1}{m} \bm{G}^{(t)} \bm{1}}^2 \mid \calF^{t}}\\
& \le - \sth{\frac{s\eta}{4} - 3 \eta^2 s^2 \pth{\beta^2 + 1}\qth{\kappa^2 L^2 + 2L \pth{1 + \frac{\kappa^2 L^2}{4}}}} \norm{\nabla F(\bar{\x}^t)}^2\\
\nonumber
& \qquad  +  3\xi^2 \eta^2 s^2 \qth{\kappa^2 L^2 + 2L \pth{1 + \frac{\kappa^2 L^2}{4}}} \\
& \qquad + \sigma^2 \eta^2 s^2 \qth{\kappa^2 L^2 + 2L \pth{\frac{1}{m} + \frac{\kappa^2 L^2}{4}}}\\
& \qquad +  \sth{\eta s L^2 + 3\eta^2 s^2 L^2\qth{\kappa^2 L^2 + 2L \pth{1 + \frac{\kappa^2 L^2}{4}}}} \frac{1}{m} \sum_{i=1}^m \norm{\x_i^t - \bar{\x}^t}^2.
\end{align*}
\end{proof}
\begin{proof}[\bf Proof of Lemma \ref{lemma: consensus}] 
Our proof shares the same outline as that in \cite{wang2022matcha} yet with non-trivial adaptation to account for multiple local updates and the fact the stochastic gradients at a client  within each round are {\em not independent}. Particularly, $\rmT_1$ in Eq.\,\eqref{eq: conseneus iterative error} does not exist in \cite{wang2022matcha}.  

We have the following relations:
\begin{align*}
    \bm{X}^{(t)} \pth{\identity - \allones} 
    &= (\bm{X}^{(t-1)} - \eta \bm{G}^{(t-1)}) W^{(t-1)} \pth{\identity - \allones}\\
    &= - \eta \sum_{q=0}^{t-1} \bm{G}^{(q)} \pth{\Pi_{\ell=q}^{t-1} W^{(\ell)} - \allones},
\end{align*}
where the last follows from the fact that all clients are initiated at the same weights.
It follows that
\begin{align}
\label{eq: conseneus iterative error}
\nonumber
\fnorm{\bm{X}^{(t)} \pth{\identity - \allones}}^2   
& \le 3\eta^2 \underbrace{\fnorm{
    \sum_{q=0}^{t-1} 
    \pth{
    \bm{G}^{\pth{q}} - \bm{G}_0^{\pth{q}}
    }
    \pth{\Pi_{\ell=q}^{t-1} W^{\pth{\ell}} - \allones}
    }^2}_{\rmT_1}\\
    \nonumber
    &~+ 3\eta^2 \underbrace{\fnorm{\sum_{q=0}^{t-1}\pth{\bm{G}_0^{\pth{q}} - s\nabla \bm{F}^{\pth{q}}}
    \pth{\prod_{\ell=q}^{t-1} W^{\pth{\ell}} - \allones}}^2}_{\rmT_2}\\
    &~+ 3\eta^2 s^2 \underbrace{\fnorm{\sum_{q=0}^{t-1}\nabla \bm{F}^{\pth{q}}
    \pth{\Pi_{\ell=q}^{t-1} W^{\pth{\ell}} - \allones}}^2}_{\rmT_3}. 
\end{align}

\paragraph{Bounding $\expect{\rmT_1}.$}
\begin{align}
\label{eq: conseneus iterative error 1}
\nonumber
    \expect{\rmT_1 } &= \sum_{q=0}^{t-1} \expect{\fnorm{ 
    \pth{
    \bm{G}^{\pth{q}} - \bm{G}_0^{\pth{q}}
    }
    \pth{\prod_{\ell=q}^{t-1} W^{\pth{\ell}} - \allones}
    }^2 } \\
    \nonumber
     &\qquad + \sum_{q=0}^{t-1} \sum_{p=0, p\neq q}^{t-1} \expect{\iprod{\pth{\bm{G}^{\pth{p}} - \bm{G}_0^{\pth{p}}} \pth{\Pi_{\ell=p}^{t-1}W^{\pth{\ell}} - \allones}}{\pth{
    \bm{G}^{\pth{q}} - \bm{G}_0^{\pth{q}}
    }
    \pth{\Pi_{\ell=q}^{t-1} W^{\pth{\ell}} - \allones}
    } }\\
    \nonumber
    &\overset{(a)}{\le} \sum_{q=0}^{t-1} \rho^{t-q}\expect{\fnorm{\bm{G}^{\pth{q}} - \bm{G}_0^{\pth{q}}}^2 } \\
    \nonumber
    &\qquad  + \sum_{q=0}^{t-1} \sum_{p=0, p\neq q}^{t-1}\expect{ 
    \fnorm{\pth{
    \bm{G}^{\pth{p}} - \bm{G}_0^{\pth{p}}
    }
    (\prod_{\ell=p}^{t-1} W^{\pth{\ell}} - \allones)}
    \fnorm{
    \pth{
    \bm{G}^{\pth{q}} - \bm{G}_0^{\pth{q}}
    }
    \pth{\Pi_{\ell=q}^{t-1} W^{\pth{\ell}} - \allones}
    } }\\
    \nonumber
    &\le \sum_{q=0}^{t-1} \rho^{t-q}\expect{\fnorm{\bm{G}^{\pth{q}} - \bm{G}_0^{\pth{q}}}^2 } \\
    & \qquad     + \sum_{q=0}^{t-1} \sum_{p=0, p\neq q}^{t-1} \expect{
    {
    \frac{\rho^{t-p}}{2\epsilon}
    \fnorm{\pth{
    \bm{G}^{\pth{p}} - \bm{G}_0^{\pth{p}}
    }}^2 
    +
    \frac{\epsilon \rho^{t-q}}{2}
    \fnorm{
    \pth{
    \bm{G}^{\pth{q}} - \bm{G}_0^{\pth{q}}
    }}^2 } }, 
\end{align}
where inequality (a) follows from Lemma \ref{lemma: average mixing}, and Cauchy-Schwarz inequality. 
Next, we bound the second term, choose $\epsilon = \rho^{\frac{q-p}{2}},$
\begin{align*}
    &\sum_{q=0}^{t-1} \sum_{p=0, p\neq q}^{t-1}  \frac{\sqrt{\rho}^{2t-p-q}}{2}\expect{
    {
    \fnorm{\pth{
    \bm{G}^{\pth{p}} - \bm{G}_0^{\pth{p}}
    }}^2 
    +
    \fnorm{
    \pth{
    \bm{G}^{\pth{q}} - \bm{G}_0^{\pth{q}}
    }}^2 } } \\ 
    & \le \sum_{q=0}^{t-1} \sum_{p=0}^{t-1} \frac{\sqrt{\rho}^{2t-p-q}}{2}\expect{
    {
    \fnorm{\pth{
    \bm{G}^{\pth{p}} - \bm{G}_0^{\pth{p}}
    }}^2 
    +
    \fnorm{
    \pth{
    \bm{G}^{\pth{q}} - \bm{G}_0^{\pth{q}}
    }}^2 }}\\
    & = \sum_{p=0}^{t-1}  
    \frac{\sqrt{\rho}^{t-p}}{2}\expect{
    \fnorm{\pth{
    \bm{G}^{\pth{p}} - \bm{G}_0^{\pth{p}}
    }}^2 }
    \sum_{q=0}^{t-1}\sqrt{\rho}^{t-q}
    +
    \sum_{q=0}^{t-1}  \frac{\sqrt{\rho}^{t-p}}{2}\expect{
    \fnorm{
    \pth{
    \bm{G}^{\pth{q}} - \bm{G}_0^{\pth{q}}
    }}^2 }
    \sum_{p=0}^{t-1}
    \sqrt{\rho}^{t-p}\\ 
    & = \frac{\sqrt{\rho} - \sqrt{\rho}^{t+1}}{1-\sqrt{\rho}}\sum_{q=0}^{t-1}  \sqrt{\rho}^{t-q}\expect{
    \fnorm{
    \pth{
    \bm{G}^{\pth{q}} - \bm{G}_0^{\pth{q}}
    }}^2 }.
\end{align*}
Plugging the above bound back in Eq.\eqref{eq: conseneus iterative error 1}, we get
\begin{align*}
\expect{\rmT_1} 
    & \le  \sum_{q=0}^{t-1} 
    \qth{\sqrt{\rho}^{t-q} + \frac{\sqrt{\rho} - \sqrt{\rho}^{t+1}}{1-\sqrt{\rho}}}
    \sqrt{\rho}^{t-q}\expect{\fnorm{\bm{G}^{\pth{q}} - \bm{G}_0^{\pth{q}}}^2 }\\
    & = \sum_{q=0}^{t-1} 
    \qth{\frac{\sqrt{\rho} + \sqrt{\rho}^{t+1} \pth{\frac{1 - \sqrt{\rho}}{\sqrt{\rho}^{q+1}}-1} }{1-\sqrt{\rho}}}
    \sqrt{\rho}^{t-q}\expect{\fnorm{\bm{G}^{\pth{q}} - \bm{G}_0^{\pth{q}}}^2  }\\
    & \le \sum_{q=0}^{t-1} 
    \qth{\frac{\sqrt{\rho} + \sqrt{\rho} \pth{1 - \sqrt{\rho} - \sqrt{\rho}^t} }{1-\sqrt{\rho}}}
    \sqrt{\rho}^{t-q}\expect{\fnorm{\bm{G}^{\pth{q}} - \bm{G}_0^{\pth{q}}}^2 }\\
    & \le \frac{2\sqrt{\rho}}{1-\sqrt{\rho}}\sum_{q=0}^{t-1} 
    \sqrt{\rho}^{t-q}\expect{\fnorm{\bm{G}^{\pth{q}} - \bm{G}_0^{\pth{q}}}^2 }.
\end{align*}
It remains to bound $\expect{\fnorm{\bm{G}^{\pth{q}} - \bm{G}_0^{\pth{q}}}^2 },$
\begin{align*}
    \expect{\fnorm{\bm{G}^{\pth{q}} - \bm{G}_0^{\pth{q}}}^2 } 
    & \overset{(a)}{\le} \kappa^2 \eta^2 \binom{s}{2}^2 L^2 \expect{\fnorm{\bm{G}_0^{\pth{q}} - s\nabla \bm{F}^{\pth{q}} + s\nabla \bm{F}^{\pth{q}}  }^2 }\\
    & \le 2 \kappa^2 \eta^2 \binom{s}{2}^2 L^2 \expect{\fnorm{\bm{G}_0^{\pth{q}} - s\nabla \bm{F}^{\pth{q}}}^2 } 
        + 2 \kappa^2 s^2 \eta^2 \binom{s}{2}^2 L^2 \expect{\fnorm{\nabla \bm{F}^{\pth{q}}}^2 }\\
    & \le 2 \kappa^2 \eta^2 \binom{s}{2}^2 L^2 m \sigma^2 
        + 2 \kappa^2 s^2 \eta^2 \binom{s}{2}^2 L^2 \expect{\fnorm{\nabla \bm{F}^{\pth{q}}}^2 }, 
\end{align*}
where inequality (a) follows from Lemma \ref{lemma: local step perturbation}.  
Thus,
\begin{align*}
    \expect{\rmT_1 } 
    & \le \frac{2\sqrt{\rho}}{1-\sqrt{\rho}}\sum_{q=0}^{t-1} 
    \sqrt{\rho}^{t-q}\expect{\fnorm{\bm{G}^{\pth{q}} - \bm{G}_0^{\pth{q}}}^2 }\\
    &\le \frac{2\sqrt{\rho}}{1-\sqrt{\rho}}\sum_{q=0}^{t-1} 
    \sqrt{\rho}^{t-q} \qth{
      2 \kappa^2 \eta^2 \binom{s}{2}^2 L^2 m \sigma^2 
    + 2 \kappa^2 s^2 \eta^2 \binom{s}{2}^2 L^2 \expect{\fnorm{\nabla \bm{F}^{\pth{q}}}^2 }
    }\\
    & \le \frac{4\kappa^2 \eta^2 \binom{s}{2}^2 L^2 m \sigma^2 \rho}{\pth{1-\sqrt{\rho}}^2}
        + \frac{4\kappa^2 s^2 \eta^2 \binom{s}{2}^2 L^2\sqrt{\rho}}{1-\sqrt{\rho}}
    \sum_{q=0}^{t-1} \sqrt{\rho}^{t-q}
    \expect{\fnorm{\nabla \bm{F}^{\pth{q}}}^2}
\end{align*}
\paragraph{Bounding $\expect{\rmT_2 }.$}
\begin{align*}
    \expect{\rmT_2 } & = \expect{\fnorm{
    \sum_{q=0}^{t-1} 
    \pth{\bm{G}_0^{\pth{q}} - s\nabla \bm{F}^{\pth{q}}}
    \pth{\Pi_{\ell=q}^{t-1} W^{\pth{\ell}} - \allones}
    }^2 }\\
    &\le \sum_{q=0}^{t-1}\rho^{t-q}\expect{
    \fnorm{
    \pth{\bm{G}_0^{\pth{q}} - s\nabla \bm{F}^{\pth{q}}}
    }^2}\\
    &\le \frac{\rho m s^2 \sigma^2}{1-\rho} .
\end{align*}
\paragraph{Bounding $\expect{\rmT_3}.$}
Use a similar trick as in bounding $\expect{\rmT_1},$ and we get
\begin{align*}
\expect{\rmT_3} &= \expect{\fnorm{\sum_{q=0}^{t-1}\nabla \bm{F}^{\pth{q}}\pth{\Pi_{\ell=q}^{t-1} W^{\pth{\ell}} - \allones}}^2}  \\
& \le \frac{2\sqrt{\rho}}{1-\sqrt{\rho}}\sum_{q=0}^{t-1} \sqrt{\rho}^{t-q}\expect{\fnorm{\nabla \bm{F}^{\pth{q}}}^2 }.
\end{align*}
For the last term, we have
\begin{align*}
&\frac{1}{mT}\sum_{t=0}^{T-1}\sum_{q=0}^{t-1} 
\sqrt{\rho}^{t-q}\expect{\fnorm{\nabla \bm{F}^{\pth{q}}}^2} 
= \frac{1}{mT}\sum_{k=0}^{T-1}\expect{\fnorm{\nabla \bm{F}^{\pth{t}}}^2} \sum_{q=1}^{T-1-t}\sqrt{\rho}^{q}\\
&\le \frac{\sqrt{\rho}}{mT\pth{1-\sqrt{\rho}}}\sum_{t=0}^{T-1}\expect{\fnorm{\nabla \bm{F}^{\pth{t}}}^2}.
\end{align*}
\paragraph{Putting them together.}
\begin{align*}
    &\frac{1}{mT}\sum_{t=0}^{T-1}\expect{\fnorm{\bm{X}^{\pth{t}} \pth{\identity - \allones}}^2}\\
    &\le \sigma^2 \rho \qth{\frac{12\kappa^2 \eta^4 \binom{s}{2}^2 L^2 }{\pth{1-\sqrt{\rho}}^2}
    + {\frac{3\eta^2 s^2}{1-\rho} }
    }+
    \qth{
    2\kappa^2 \eta^2 \binom{s}{2}^2 L^2 + 1
    }\frac{6 \eta^2 s^2\sqrt{\rho}}{1-\sqrt{\rho}}\frac{1}{mT}\sum_{t=0}^{T-1}\sum_{q=0}^{t-1} 
    \sqrt{\rho}^{t-q}\expect{\fnorm{\nabla \bm{F}^{\pth{q}}}^2 }\\
    &\le \sigma^2 \rho \qth{\frac{12\kappa^2 \eta^4 \binom{s}{2}^2 L^2 }{\pth{1-\sqrt{\rho}}^2}
    + {\frac{3\eta^2 s^2}{1-\rho} }
    }+
    \qth{
    2\kappa^2 \eta^2 \binom{s}{2}^2 L^2 + 1
    }\frac{6 \eta^2 s^2 \rho}{mT\pth{1-\sqrt{\rho}}^2}\sum_{t=0}^{T-1}\expect{\fnorm{\nabla \bm{F}^{\pth{t}}}^2}.
\end{align*}
We know that
\[
2\kappa^2 \eta^2 \binom{s}{2}^2 L^2 + 1 \le \frac{L^2 \kappa^2 s^2  \eta^2}{2} s^2 + 1 \le s^2 + 1 \le 2 s^2.
\]
Put all the parts together, we get
\begin{align*}
    &\frac{1}{mT}\sum_{k=0}^{T-1}\expect{\fnorm{\bm{X}^{\pth{t}} \pth{\identity - \allones}}^2}
    \le 6 s^2 \eta^2 \sigma^2 \rho \qth{\frac{2}{\pth{1-\sqrt{\rho}}^2}
    + {\frac{1}{1-\rho} }}
    +\frac{72 \xi^2 \eta^2 s^4 \rho}{\pth{1-\sqrt{\rho}}^2}
    +\frac{72 \pth{\beta^2 + 1} \eta^2 s^4 \rho}{\pth{1-\sqrt{\rho}}^2}\frac{1}{T}\sum_{t=0}^{T-1}\expect{\norm{\nabla F(\bar{\x}^t)}^2}, 
\end{align*}
which follows from the step size $\eta$
\[
    \frac{36 L^2 \eta^2 s^4 \rho}{\pth{1-\sqrt{\rho}}^2}\le \frac{1}{2}.
\]
\end{proof}
\begin{proof}[\bf Proof of Theorem \ref{thm: main}]
By taking an extra expectation over the remaining randomness and telescoping sum, we get
\begin{small}
\begin{align*}
&\frac{F^\star - F(\bar{\x}^0)}{T}\\
& \le - \qth{\frac{s\eta}{4} - 3\eta^2s^2\pth{\beta^2 + 1}\qth{\kappa^2 L^2 + 2 L \pth{1 + \frac{\kappa^2 L^2}{4}}}}\frac{1}{T}\sum_{t=0}^{T-1}\expect{\norm{\nabla F(\bar{\x}^t)}^2}\\
& \qquad +  3\xi^2 \eta^2 s^2 \qth{\kappa^2 L^2 + 2 L \pth{1 + \frac{\kappa^2 L^2}{4}}}\\
& \qquad +  \sigma^2 \eta^2 s^2 \qth{\kappa^2 L^2 + 2 L \pth{\frac{1}{m} + \frac{\kappa^2 L^2}{4}}}\\
& \qquad + \sth{s \eta L^2 + 3\eta^2 s^2 L^2\qth{\kappa^2 L^2 + 2 L \pth{1 + \frac{\kappa^2 L^2}{4}}}} 
    \sth{6 s^2 \eta^2 \sigma^2 \qth{\frac{2\rho }{\pth{1-\sqrt{\rho}}^2}
    + {\frac{\rho }{1-\rho} }}
    +\frac{72 \xi^2 \eta^2 s^4 \rho}{\pth{1-\sqrt{\rho}}^2}}\\
&\qquad + \sth{s \eta L^2 + 3\eta^2 s^2 L^2\qth{\kappa^2 L^2+ 2 L \pth{1 + \frac{\kappa^2 L^2}{4}}}}\qth{\frac{72 \pth{\beta^2 + 1} \eta^2 s^4 \rho}{\pth{1-\sqrt{\rho}}^2}\frac{1}{T}\sum_{t=0}^{T-1}\expect{\norm{\nabla F(\bar{\x}^t)}^2}}\\
& = - s \eta \sth{\frac{1}{4} - 3\eta s\pth{\beta^2 + 1}\qth{\kappa^2 L^2 + 2 L \pth{1 + \frac{\kappa^2 L^2}{4}}}\qth{1 + \frac{72 \eta^2 L^2 s^4 \rho}{\pth{1-\sqrt{\rho}}^2}}-\frac{72 \pth{\beta^2 + 1} L^2 \eta^2 s^4 \rho}{\pth{1-\sqrt{\rho}}^2}}\frac{1}{T}\sum_{t=0}^{T-1}\expect{\norm{\nabla F(\bar{\x}^t)}^2}\\
&\qquad + \eta^2 s^2 
\sth{\qth{\kappa^2 L^2 + 2 L \pth{\frac{1}{m} + \frac{\kappa^2 L^2}{4}}}
+ 6 \sth{s \eta L^2 + 3\eta^2 s^2 L^2\qth{\kappa^2 L^2 + 2 L \pth{1 + \frac{\kappa^2 L^2}{4}}}} 
    \qth{\frac{2\rho }{\pth{1-\sqrt{\rho}}^2}
    + {\frac{\rho }{1-\rho} }}
}\sigma^2\\
&\qquad + 3 \eta^2 s^2 \sth{
\qth{\kappa^2 L^2 + 2 L \pth{1 + \frac{\kappa^2 L^2}{4}}}
+ \sth{s \eta L^2 + 3\eta^2 s^2 L^2\qth{\kappa^2 L^2 + 2 L \pth{1 + \frac{\kappa^2 L^2}{4}}}}\frac{24 s^2 \rho}{\pth{1-\sqrt{\rho}}^2}
} \xi^2
\end{align*}
\end{small}
What follows refines the choice of the step-size:
\begin{footnotesize}
\begin{align*}
    &\frac{1}{4} - 3\eta s\pth{\beta^2 + 1}\qth{\kappa^2 L^2 + 2 L \pth{1 + \frac{\kappa^2 L^2}{4}}}\qth{1 + \frac{72 \eta^2 L^2 s^4 \rho}{\pth{1-\sqrt{\rho}}^2}}-\frac{72 \pth{\beta^2 + 1} L^2 \eta^2 s^4 \rho}{\pth{1-\sqrt{\rho}}^2} \\
    &\overset{(\rma)}{\ge}\frac{1}{4} - 3\eta s\pth{\beta^2 + 1}\qth{\kappa^2 L^2 + 2 L \pth{1 + \frac{\kappa^2 L^2}{4}}}\qth{1 + \frac{144 s^2 \rho}{\kappa^2 \pth{1-\sqrt{\rho}}^2}}-\frac{144 \pth{\beta^2 + 1} L \eta s^3 \rho}{\kappa\pth{1-\sqrt{\rho}}^2}\overset{(\rmb)}{\ge} \frac{1}{8},
\end{align*}
\end{footnotesize}
where $(\rma)$ follows because $\kappa \eta s L \le \sqrt{2}$
, while $(\rmb)$ because 
\[
\eta \le \frac{1}{24\pth{\beta^2 + 1}\qth{\kappa^2 L^2 + 2 L \pth{1 + \frac{\kappa^2 L^2}{4}}}\qth{1 + \frac{144 s^2 \rho}{\kappa^2 \pth{1-\sqrt{\rho}}^2}}+\frac{1152 \pth{\beta^2 + 1} L s^2 \rho}{\kappa\pth{1-\sqrt{\rho}}^2}}.
\]

\begin{align*}
    &\qth{\kappa^2 L^2 + 2 L \pth{\frac{1}{m} + \frac{\kappa^2 L^2}{4}}}
+ 6 \qth{s \eta L^2 + 3\eta^2 s^2 L^2\pth{\kappa^2 L^2 + 2 L \pth{1 + \frac{\kappa^2 L^2}{4}}}} 
    \qth{\frac{2\rho }{\pth{1-\sqrt{\rho}}^2}
    + {\frac{\rho }{1-\rho} }} \\
   \le & \qth{\kappa^2 L^2 + 2 L \pth{\frac{1}{m} + \frac{\kappa^2 L^2}{4}}}
+ 6 \qth{s\eta L^2 + \frac{3\sqrt{2} s \eta  L}{\kappa}  \pth{\kappa^2 L^2 + 2 L \pth{1 + \frac{\kappa^2 L^2}{4}}}} 
    \qth{\frac{2\rho }{\pth{1-\sqrt{\rho}}^2}
    + {\frac{\rho }{1-\rho} }} 
\end{align*}
In addition, we need to ensure that $\eta \rho s^3\le 1$, with such an additional choice, we get 
\begin{align*}
&\qth{\kappa^2 L^2 + 2 L \pth{1 + \frac{\kappa^2 L^2}{4}}}
+ \sth{s \eta L^2 + 3\eta^2 s^2 L^2\qth{\kappa^2 L^2 + 2 L \pth{1 + \frac{\kappa^2 L^2}{4}}}}\frac{24 s^2 \rho}{\pth{1-\sqrt{\rho}}^2}\\
\le & \qth{\kappa^2 L^2 + 2 L \pth{1 + \frac{\kappa^2 L^2}{4}}}
+ \sth{1 + 3\eta s \qth{\kappa^2 L^2 + 2 L \pth{1 + \frac{\kappa^2 L^2}{4}}}}\frac{24 L^2}{\pth{1-\sqrt{\rho}}^2}.
\end{align*}
A little rearrangement, and applying the fact that
\[
1 - \rho = \pth{1 - \sqrt{\rho }}\pth{1 + \sqrt{\rho}} \ge \pth{1 - \sqrt{\rho }}^2, 
\]
 we arrive at
\begin{align*}
\frac{1}{T}\sum_{t=0}^{T-1}\expect{\norm{\nabla F(\bar{\x}^t)}^2} 
& \le \frac{8F(\bar{\x}^0) - 8 F^\star }{s \eta T} \\
&~+ 8 \eta s \kappa^2 L^2
\pth{1 + \frac{L}{2}} 
 \sigma^2\\ 
&~+ \frac{144  \rho \pth{\eta s }^2  }{\pth{1-\sqrt{\rho}}^2}   \sth{ L^2 + \frac{3\sqrt{2} L}{\kappa}  \qth{\kappa^2 L^2 + 2 L \pth{1 + \frac{\kappa^2 L^2}{4}}}}\sigma^2 \\
&~ +  \frac{16 \eta s L \sigma^2}{m}\\
&~+ 24 \eta s \sth{
\qth{\kappa^2 L^2 + 2 L \pth{1 + \frac{\kappa^2 L^2}{4}}}
+ \frac{24 L^2}{\pth{1-\sqrt{\rho}}^2}
} \xi^2\\
&~ + \frac{1728 L^2 \pth{\eta s }^2}{\pth{1-\sqrt{\rho}}^2}\qth{\kappa^2 L^2 + 2 L \pth{1 + \frac{\kappa^2 L^2}{4}}} \xi^2.
\end{align*}

Choose the step size to be $\eta = \sqrt{\frac{m}{sT}}.$
When $T$ is sufficiently large such that 
\begin{align*}
\eta \le \min \left\{\frac{1}{24\pth{\beta^2 + 1}\qth{\kappa^2 L^2 + 2 L \pth{1 + \frac{\kappa^2 L^2}{4}}} \qth{1 + \frac{144 s^2 \rho}{\kappa^2 \pth{1-\sqrt{\rho}}^2}}+\frac{1152 \pth{\beta^2 + 1} L s^2 \rho}{\kappa\pth{1-\sqrt{\rho}}^2}}, \frac{1}{2s}, 
\frac{\sqrt{2}}{\kappa sL},\frac{1}{\rho s^3},
\frac{1-\sqrt{\rho}}{6\sqrt{2\rho}Ls^2}\right\},
\end{align*}
we have 
\begin{align*}
\frac{1}{T}\sum_{t=0}^{T-1}\expect{\norm{\nabla F(\bar{\x}^t)}^2} 
& \le O \left\{\frac{8F(\bar{\x}^0) - 8 F^\star }{\sqrt{msT}}\right. \\
&~+ 8  \kappa^2 L^2
\pth{1 + \frac{L}{2}} 
 \sigma^2\sqrt{\frac{ms}{T}}\\ 
&~+ \frac{144  \rho  }{\pth{1-\sqrt{\rho}}^2}   \sth{ L^2 + \frac{3\sqrt{2} L}{\kappa}  \qth{\kappa^2 L^2 + 2 L \pth{1 + \frac{\kappa^2 L^2}{4}}}}\sigma^2 \frac{ms}{T}\\
&~ +  16 L \sigma^2\sqrt{\frac{s}{mT}}\\
&~+ 24 \sth{
\qth{\kappa^2 L^2 + 2 L \pth{1 + \frac{\kappa^2 L^2}{4}}}
+ \frac{24 L^2}{\pth{1-\sqrt{\rho}}^2}
} \xi^2 \sqrt{\frac{ms}{T}}\\
&~ \left.+ \frac{1728 L^2 }{\pth{1-\sqrt{\rho}}^2}\qth{\kappa^2 L^2 + 2 L \pth{1 + \frac{\kappa^2 L^2}{4}}} \xi^2\frac{ms}{T}\right\}.
\end{align*}
\end{proof}
\end{document}